\newcounter{RevisionNumber} 
\newcommand{\hlc}[2]{\ifthenelse{\value{RevisionNumber}=#1}{\hl{#2}}{#2}}
\newcommand{\ASnote}[2]{\ifthenelse{\value{RevisionNumber}=#1}{\hl{#2}}{}}
\newtheorem{thm}{Theorem}
\newtheorem{claim}{Claim}
\newtheorem{defn}{Definition}
\newcommand{\eps}{\epsilon}
\newcommand{\ce}{e^{1/3}}
\newcommand{\bbR}{\mathbb{R}}
\newcommand{\bbB}{\mathbb{B}}
\newcommand{\CS}{\mathcal{S}}
\newcommand{\CX}{\mathcal{X}}
\newcommand{\CF}{\mathcal{F}}
\newcommand{\goes}{\rightarrow}
\providecommand{\norm}[1]{\lVert#1\rVert}
\newcommand{\be}{\begin{equation}}
\newcommand{\ee}{\end{equation}}
\newcommand{\bez}{\begin{equation*}}
\newcommand{\eez}{\end{equation*}}
\newcommand{\slidetextsize}{\footnotesize}
\providecommand{\bsl}[1]{\begin{slide}{#1}\slidetextsize}
\providecommand{\esl}{\end{slide}}
\newcommand{\OMIT}[1]{}
\newcommand{\regret}{R}
\newcommand{\UCB}{\ensuremath{\textsc{ucb1}}}
\newcommand{\ucb}{\UCB}
\newcommand{\ucbo}{\ensuremath{\textsc{ora}}}
\newcommand{\ucbc}{\ensuremath{\textsc{bwc}}}
\newcommand{\expthree}{\ensuremath{\textsc{exp3.s}}}
\newcommand{\WHP}{\textnormal{w.h.p.}}
\DeclareMathOperator{\FP}{FP}
\newcommand{\refeq}[1]{(\ref{#1})}
\newenvironment{note}[1]
	{\begin{trivlist}
		\item[\hskip \labelsep {\em #1}]}
	{\end{trivlist}}
\theoremstyle{plain}
\newcommand{\A}{\ensuremath{\mathcal{A}}}
\title{Adapting to the Shifting Intent of Search Queries%
\thanks{This is the full version of a paper in \emph{NIPS 2009}. }}
\author{
Umar Syed%
\thanks{This work was done while the author was an intern at Microsoft Research and a student in the Department of Computer Science, Princeton University.} \\
Department of Computer\\
and Information Science\\
University of Pennsylvania\\
Philadelphia, PA 19104\\
\texttt{usyed@cis.upenn.edu} \\
\And
Aleksandrs Slivkins\\
Microsoft Research\\
Mountain View, CA 94043\\
\texttt{slivkins@microsoft.com} \\
\And
Nina Mishra\\
Microsoft Research\\
Mountain View, CA 94043\\
\texttt{ninam@microsoft.com} \\
}
\begin{document}

\maketitle

\vspace{-2mm} 
\begin{abstract}
Search engines today present results that are often oblivious to
abrupt shifts in intent.  For example, the query
`independence day' usually refers to a US holiday, but the intent of this query abruptly changed
during the release of a major film by that name.  While no studies exactly
quantify the magnitude of intent-shifting traffic, studies suggest
that news events, seasonal topics, pop culture, etc account for 50\% of all
search queries.  This paper shows that the signals a search engine
receives can be used to both determine that a shift in
intent has happened, as well as find a result that is now more relevant.
We present a meta-algorithm that marries a classifier with a bandit
algorithm to achieve regret that depends logarithmically on the number
of query impressions, under certain assumptions.  We provide strong
evidence that this regret is close to the best achievable.  Finally,
via a series of experiments, we demonstrate that our algorithm
outperforms prior approaches, particularly as the amount of
intent-shifting traffic increases.
\end{abstract}

\section{Introduction}
\label{sec:intro}

Search engines typically use a ranking function to order results.  The
function scores a document by the extent to which it matches the
query, and documents are ordered according to this score.  Usually, this
function is fixed in the sense that it does not change from
one query to another and also does not change over time.  

Intuitively, a query is ``intent-shifting'' if the most desired search result(s)
change over time.  More concretely, a query's intent has shifted if the click
distribution over search results at some time differs from the click
distribution at a later time.  For the query `tomato' on the heels of
a tomato salmonella outbreak, the probability a user clicks on a news
story describing the outbreak increases while the probability a user
clicks on the Wikipedia entry for tomatoes rapidly decreases.
There are studies that suggest that queries likely to be intent-shifting --- such as pop culture, news
events, trends, and seasonal topics queries --- constitute roughly half of
the search queries that a search engine receives~\cite{pew}. 


The goal of this paper is to devise an algorithm that quickly adapts
search results to shifts in user intent. Ideally, for every
query and every point in time, we would like to display the search
result that users are most likely to click. Since traditional ranking
features like PageRank~\cite{pagerank} change slowly over time, and may be
misleading if user intent has shifted very recently, we want to use
just the observed click behavior of users to decide which search
results to display.

There are many signals a search engine can use to detect when the intent of a query shifts. Query features such as
as volume, abandonment rate, reformulation rate, occurrence in news articles, and the age of matching documents can all be used to build a classifier
which, given a query, determines whether the intent has shifted.  We
refer to these features as the {\em context}, and an occassion when a shift in intent occurs as an {\em event}.

One major challenge in building an event classifier is
obtaining training data.  For most query and date combinations
(e.g. `tomato, 06/09/2008'), it will be difficult even for a
human labeler to recall in hindsight whether an event related to the
query occurred on that date.  In this paper, we propose a novel
solution that learns from unlabeled contexts and user click activity.

{\bf Contributions.} We describe a new algorithm that leverages the information contained in contexts, provided that such information is sufficiently rich. Specifically, we assume that there exists a deterministic oracle (unknown to the algorithm) which inputs the context and outputs a correct binary prediction of whether an event has occurred in the current round. To simulate such an oracle, we use a classification algorithm. However, we do \emph{not} assume that we have a priori labeled samples to train such a classifier. Instead, we generate the labels ourselves.

Our algorithm is in fact a meta-algorithm that combines a
bandit algorithm designed for the event-free setting with an online
classification algorithm. The classifier uses the contexts to predict
when events occur, and the bandit algorithm ``starts over'' on
positive predictions. The bandit algorithm provides feedback to the
classifier by checking, soon after each of the classifier's positive
predictions, whether the optimal search result actually changed. In such a setup, one needs to overcome several technical hurdles, e.g. ensure that the feedback is not ``contaminated" by events in the past and in the near future. We design the whole triad --- the bandit algorithm, the classifier, and the meta-algorithm --- so as to obtain strong provable guarantees. Our bandit subroutine --- a novel version of algorithm $\ucb$ from~\cite{bandits-ucb1} which additionally provides high-confidence estimates on the suboptimality of arms --- may be of independent interest.



For suitable choices of the bandit and classifier subroutines, the
regret incurred by our meta-algorithm is (under certain mild assumptions)
at most $O(k + d_\CF) (\frac{n}{\Delta} \log T)$, where $k$ is the number of events,
$d_\CF$ is a certain measure of the complexity of the
concept class $\CF$ used by the classifier, $n$ is the number of relevant search results,\footnote{In practice, the arms can be restricted to, say, the top ten results that match the query.} $\Delta$ is the ``minimum suboptimality'' of any search result (defined formally in Section \ref{sec:prelim}), and $T$ is the total
number of impressions. This regret bound has a very weak dependence on $T$, which is
highly desirable for search engines that receive much traffic.

The context turns out to be crucial for achieving logarithmic
dependence on $T$.  Indeed, we show that any bandit algorithm that
ignores context suffers regret $\Omega(\sqrt{T})$, even when there is only one event.  Unlike many lower
bounds for bandit problems, our lower bound holds even when $\Delta$
is a constant independent of $T$. We also show that assuming a logarithmic dependence on $T$, the dependence on $k$ and $d_\CF$ is essentially optimal.

\OMIT{
We also prove that our algorithm is optimal, in a certain sense. In
particular, we prove that for any algorithm $A$, there are problems for which
either $\regret(A) \ge ((k + d_\CF) \log T)$, or $\regret(A) \ge \Omega(T^{1/3})$. In other words,
any algorithm that suffers logarithmic regret in $T$ must also suffer
regret that depends on the number of events and the complexity of the
concept class.}


For empirical evaluation, we ideally need access to the traffic of a
real search engine so that search results can be adapted based on
real-time click activity.  Since we did not have access to live
traffic, we instead conduct a series of synthetic experiments.  The
experiments show that if there are no events then the
well-studied $\ucb$ algorithm~\cite{bandits-ucb1} performs the best.  However, when many different queries
experience events, the performance
of our algorithm significantly outperforms prior methods.

\section{Related Work}
While there has been a substantial amount of work on ranking
algorithms~\cite{rankboost,ranknet,ranksvm,orderthings,listwise},
all of these results assume that there is a fixed ranking
function to learn, not one that shifts over time.  Online bandit
algorithms (see~\cite{CesaBL-book} for background) have been considered in the context of ranking.  For
instance, Radlinski et al~\cite{RBA-icml08} showed how to compose
several instantiations of a bandit algorithm to produce a ranked list
of search results. Pandey et al \cite{yahoo-bandits-icml07} showed that
bandit algorithms can be effective in serving advertisements to search
engine users.  These approaches also assume a stationary inference problem.

Although no existing bandit algorithms are specifically designed for our problem setting,
there are two well-known algorithms that we compare against in this paper.  The
$\ucb$ algorithm~\cite{bandits-ucb1} assumes fixed click probabilities
and has regret at most $O(\frac{n}{\Delta} \log T)$.  The $\expthree$
algorithm~\cite{bandits-exp3} assumes that click probabilities can
change on every round and has regret at most $O(k\sqrt{nT\log(nT)})$
for arbitrary $p_t$'s.  Note that the dependence of $\expthree$ on $T$ is
substantially stronger.

The ``contextual bandits'' problem setting~\cite{Wang-sideMAB05,yahoo-bandits07,Hazan-colt07,contextual,kakade} is similar to
ours.  A key difference is that the context received in each round
is assumed to contain information about the \emph{identity} of an
optimal result $i^*_t$, a considerably stronger assumption than we
make.  Our context includes only side information such as volume of the
query, but we never actually receive information about the identity of
the optimal result.

A different approach is to build a statistical model of user click
behavior.  This approach has been applied to the problem of serving
news articles on the web. Diaz~\cite{diaz} used a regularized logistic
model to determine when to surface news results for a query.
Agarwal et al~\cite{agarwalchen} used several models, including a
dynamic linear growth curve model.

There has also been work on detecting bursts in data streams.
For example, Kleinberg~\cite{bursty} describes a state-based model
for inferring stages of burstiness.
The goal of our work is not to detect bursts, but rather to
predict shifts in intent.

In a recent concurrent and independent work, Yu et al~\cite{Yu-icml09} studied bandit problems with
``piecewise-stationary'' distributions, a notion that closely resembles
our definition of events. However, they make different assumptions
than we do about the information a bandit algorithm can
observe. Expressed in the language of our problem setting, they assume
that from time-to-time a bandit algorithm receives information about
how users \emph{would have} responded to search results that are never
actually displayed. For our setting, this assumption is clearly inappropriate.

\section{Problem Formulation and Preliminaries}
\label{sec:prelim}

We view the problem of deciding which search results to display in
response to user click behavior as a \emph{bandit
problem}, a well-known type of sequential decision problem. For a
given query $q$, the task is to determine, at each round $t \in \{1,
\ldots, T\}$ that $q$ is issued by a user to our search engine, a
single result $i_t \in \{1, \ldots, n\}$ to display.\footnote{For
simplicity, we focus on the task of returning a single result, and not
a list of results.  Techniques from~\cite{RBA-icml08} may be adopted
to find a good list of results.} This
result is clicked by the user with probability $p_t(i_t)$. A bandit
algorithm $\mathcal{A}$ chooses $i_t$ using only observed information from
previous rounds, i.e., all previously displayed results and
received clicks. The performance of an algorithm $\mathcal{A}$ is measured by
its \emph{regret}:
$\regret_\mathcal{A}(T)
	\triangleq E\left[\sum_{t=1}^T p_t(i^*_t) - p_t(i_t)\right]$,
where an \emph{optimal}
result $i^*_t \in \arg \max_i p_t(i)$ is one with maximum click
probability, and the expectation is taken over the randomness in the
clicks and the internal randomization of the algorithm. Note our
unusually strong definition of regret: we are competing against the
best result on \emph{every} round.

We call an \emph{event} any round $t$ where $p_{t-1} \neq p_t$. It
is reasonable to assume that the number of events $k \ll T$, since we
believe that abrupt shifts in user intent are relatively rare. Most
existing bandit algorithms make no attempt to predict when events will
occur, and consequently suffer regret $\Omega(\sqrt{T})$.
On the other hand, a typical search engine receives many
signals that can be used to predict events, such as bursts in query
reformulation, average age of retrieved document, etc.

We assume that our bandit algorithm receives a \emph{context} $x_t \in \CX$ at each round $t$, and that there exists a function $f \in \CF$, in some known \emph{concept class} $\CF$, such that $f(x_t) = +1$ if an event occurs at round $t$, and $f(x_t) = -1$ otherwise.\footnote{In some of our analysis, we require contexts be restricted to a strict (concept-specific) subset of $\CX$; the value of $f$ outside this subset will technically be {\tt null}. See Section~\ref{sec:safe} for more details.} In other words, $f$ is an \emph{event oracle}. The tractability of $\CF$ will be characterized by a number $d_\CF$ called the \emph{diameter} of $\CF$, detailed in Section~\ref{sec:safe}. At each round $t$, an \emph{eventful bandit algorithm} must choose a result $i_t$ using only observed information from previous rounds, i.e., all previously displayed results and received clicks, plus all contexts up to round $t$.



In order to develop an efficient eventful bandit algorithm, we make an additional key assumption: At least one optimal result before an event is \emph{significantly} suboptimal after the event. More precisely, we assume there exists a \emph{minimum shift} $\eps_S > 0$ such that, whenever an event occurs at round $t$, we have $p_t(i^*_{t-1}) < p_t(i^*_t) - \eps_S$ for at least one previously optimal search result $i^*_{t-1}$. For our problem setting, this assumption is relatively mild: the events we are interested in tend to have a rather dramatic effect on the optimal search results. Moreover, our bounds are parameterized by
	$\Delta = \min_t \min_{i \neq i^*_t} p_t(i^*_t) - p_t(i)$,
the \emph{minimum suboptimality} of any suboptimal result.

We summarize the notation in Table~\ref{tab:notation}.

\begin{table}
\caption{Notation}
\label{tab:notation}
\begin{tabular}{cc}

\begin{minipage}[c]{0.48\columnwidth}
\begin{center}
\begin{tabular}{l|l}
 	$\CX$ & universe of contexts \\
 	$x_t \in \CX$ & context in round $t$ \\
  	$\CF$ & concept class \\
  	$f\in \CF$ & concept \\
  	$d_\CF$ & diameter of $\CF$ \\
\end{tabular}
\end{center}
\end{minipage}

&

\begin{minipage}[c]{0.48\columnwidth}
\begin{center}
\begin{tabular}{l|l}
  	$k$ & number of events \\
 	$n$ & number of arms \\
  	$T$ & time horizon \\
  	$\eps_S$ & min shift of an event\\
  	$\Delta$ & min suboptimality of an arm
\end{tabular}
\end{center}
\end{minipage}

\end{tabular}
\end{table}

Let $S$ be the set of all contexts which correspond to an event. When the classifier receives a context $x$ and predicts a ``positive", this prediction is called a \emph{true positive} if $x\in S$, and a \emph{false positive} otherwise. Likewise, when the classifier predicts a ``negative", the prediction is called a \emph{true negative} if $x\not\in S$, and a \emph{false negative} otherwise. The sample $(x,l)$ is \emph{correctly labeled} if $l = (x \in S)$.


\section{Bandit with Classifier}
\label{sec:BWC}

\newcommand{\AlexDiam}{FP-complexity}
\newcommand{\MinShift}{minimum shift}

Our algorithm is called $\ucbc$, or ``Bandit with Classifier''. Ideally, we would like to use a bandit algorithm for the event-free setting, such as $\ucb$, and restart it every time there is an event. Since we do not have an oracle to tell whether an event has happened, we use a classifier which looks at the current context and makes a binary prediction. As we mentioned in the introduction, we assume that a priori there are no labeled samples to train such a classifier, so we need to generate the labels ourselves. The high-level idea is to restart the bandit algorithm every time the classifier predicts an event, and use subsequent rounds to generate feedback (labeled samples) to train the classifier. Thus, we have a \emph{feedback loop} between the bandit algorithm and a classifier, in which the latter provides predictions and the former verifies whether they are correct, see Figure~\ref{fig:loop}.

\begin{figure}[!h]
\begin{center}
\includegraphics[scale=0.3]{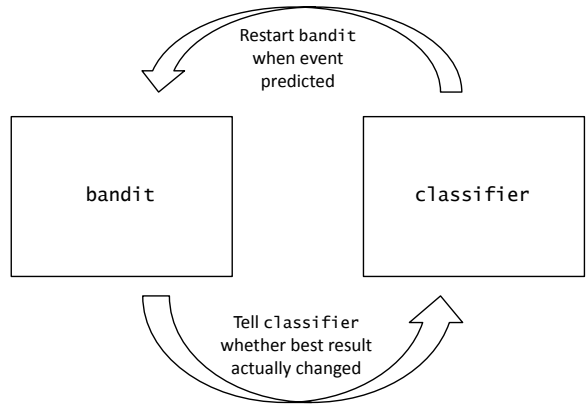}
\end{center}
\caption{A high-level picture of the operation of the $\ucbc$ algorithm, depicting the feedback loop between the two main subroutines, {\tt bandit} and {\tt classifier}.}
\label{fig:loop}
\end{figure}

So what prevents us from simply combining an off-the-shelf bandit algorithm with an off-the-shelf classifier? The central challenge is how to define the feedback. Let us outline several hurdles that we need to overcome here. A single false negative prediction will cause $\ucbc$ to miss an event, which may result in a very high regret (since it may take the bandit algorithm a very long time to adjust). Incorrectly labeled samples may contaminate the classifier, perhaps even permanently. To generate a label for a given sample, one needs to compare the state right before the current round with the state right after, in a conclusive way. Both states are not known to the algorithm a priori, and can only be learned probabilistically via exploration. A particular challenge is to ensure that such exploration is not contaminated by events in the past rounds, as well as by events that happen soon after the current round. Moreover, this exploration is generally too expensive to perform upon \emph{negative} predictions --- indeed, the whole point of $\ucbc$ is that in the absence of an event the bandit algorithm converges to the best arm and (essentially) keeps playing it --- so the classifier receives labels only upon the positive predictions.

\subsection{The meta-algorithm}

We will present our algorithm in a modular way, as a \emph{meta-algorithm} which uses the following two components: {\tt classifier} and {\tt bandit}. In each round, {\tt classifier} inputs a context $x_t$ and outputs a ``positive" or ``negative" prediction of whether an event has happened in this round. Also, it may input labeled samples of the form $(x,l)$, where $x$ is a context and $l$ is a boolean label, which it uses for training. Algorithm {\tt bandit} is a bandit algorithm that is tuned for the event-free runs.

As described above, we further require {\tt bandit} to provide feedback to the classifier about whether the best result has actually changed. The standard bandit framework does not immediately provide us with estimates from which such feedback can be obtained. Therefore we require {\tt bandit} to provide the following additional functionality: after each round $t$ of execution, it outputs a pair $(G^+, G^-)$ of subsets of arms;\footnote{Following established convention, we call the options available to a bandit algorithm ``arms''. In our setting, each arm corresponds to a search result.} we call this pair the \emph{$t$-th round guess}.\footnote{Both {\tt classifier} and {\tt bandit} make predictions (about events and arms, respectively). For clarity, we will use the term ``guess'' exclusively to refer to predictions made by {\tt bandit}, and reserve the term ``prediction'' for {\tt classifier}.} The meaning of $G^+$ and $G^-$ is that they are algorithm's estimates for, respectively, the sets of all optimal and (at least) $\eps_S$-suboptimal arms. We use $(G^+, G^-)$ to predict whether an event has happened between two  runs of {\tt bandit}. The idea is that any such event causes some arm from $G^+$ of the first run to migrate to $G^-$ of the second run. Accordingly, we generate a negative label if $G^+_i \cap G^-_j = \emptyset$, where $i$ and $j$ refers to the first and the second run, respectively (see Line 10 of Algorithm~\ref{alg}).

\OMIT{Formally, we require a weaker property: essentially, for a large enough $t$ each optimal arm lies in $G^+$ but not in $G^-$, and any arm that is at least $\eps_S$-suboptimal lies in $G^-$ but not in $G^+$.}

We formalize our assumptions on {\tt classifier} and {\tt bandit} as follows:

\begin{defn}
{\tt classifier} is {\bf safe} for a given concept class if, given only correctly labeled samples, it never outputs a false negative. {\tt bandit} is called {\bf $(L,\eps)$-testable}, for some $L\in\mathbb{N}$ and $\eps\in(0,1)$, if the following holds. Consider an event-free run of {\tt bandit}, and let $(G^+, G^-)$ be its $L$-th round guess. Then with probability at least $1-T^{-2}$, each optimal arm lies in $G^+$ but not in $G^-$, and any arm that is at least $\eps$-suboptimal lies in $G^-$ but not in $G^+$.~\footnote{Recall that $T$ here is the overall time horizon, as defined in Section~\ref{sec:prelim}.}
\end{defn}

We will discuss efficient implementations of a safe {\tt classifier} and a $(L, \eps_S)$-testable {\tt bandit} in Sections~\ref{sec:safe} and Section~\ref{sec:UCB}, respectively. For {\tt bandit}, we build on a standard algorithm \UCB~\cite{bandits-ucb1}; as it turns out, making it $(L, \eps_S)$-testable requires a significantly extended analysis.

For correctness, we require {\tt bandit} to be $(L,\eps_S)$-testable, where $\eps_S$ is the \MinShift{}. The performance of {\tt bandit} is quantified via its {\bf event-free regret}, i.e. regret on the event-free runs. Likewise, for correctness we need {\tt classifier} to be safe, and we quantify its performance via the following property, termed {\bf \AlexDiam{}}, which refers to the maximum number of false positives.

\begin{defn}\label{def:FP-complexity}
Given a concept class $\CF$, the \AlexDiam{} of {\tt classifier} is the maximum possible number of false positives it can make in an online prediction game where in each round, an adversary selects a sample, {\tt classifier} makes a prediction, and then (in some rounds) receives a correct label. Specifically, {\tt classifier} receives a correct label if and only if the prediction is a false positive. The maximum is taken over all event oracles $f\in\CF$ and all possible sequences of samples.
\end{defn}

Now we are ready to present our meta-algorithm, called $\ucbc$. It runs in phases of two alternating types: odd phases are called ``testing'' phases, and even phases are called ``adapting'' phases. The first round of phase $j$ is denoted $t_j$. In each phase we run a fresh instance of {\tt bandit}. Each testing phase lasts for $L$ rounds, where $L$ is a parameter.  Each adapting phase $j$ ends as soon as {\tt classifier} predicts ``positive"; the round $t$ when this happens is round $t_{j+1}$.  Phase $j$ is called \emph{full} if it lasts at least $L$ rounds. For a full phase $j$, let $(G^+_j, G^-_j)$ be the $L$-th round guess in this phase. After each testing phase $j$, we generate a boolean prediction $l$ of whether there was an event in the first round thereof. Specifically, letting $i$ be the most recent full phase before phase $j$, we set $l_{t_j} = \texttt{false}$ if and only if $G^+_i \cap G^-_j = \emptyset$. If $l_{t_j}$ is {\tt false}, the labeled sample $(x_{t_j}, l_{t_j})$ is fed back to the classifier. Note that {\tt classifier} never receives {\tt true}-labeled samples. Pseudocode for $\ucbc$ is given in Algorithm \ref{alg}.

Disregarding the interleaved testing phases for the moment, $\ucbc$ restarts {\tt bandit} whenever {\tt classifier} predicts ``positive'', optimistically assuming that the prediction is correct. By our assumption that events cause some optimal arm to become significantly suboptimal (see Section \ref{sec:prelim}), a correct prediction should result in $G^+_i \cap G^-_j \neq \emptyset$, where $i$ is a phase before the putative event, and $j$ is a phase after it. We use this condition in Line 10 of the pseudocode to generate the label. However, to ensure that the estimates $G_i$ and $G_j$ are reliable, we require that phases $i$ and $j$ are full. And to ensure that the full phases closest to a putative event are not too far from it, we interleave a full testing phase every other phase.

\begin{algorithm}[t]
\caption{Meta-algorithm $\ucbc$ (``Bandit with Classifier")}
\label{alg}
\begin{algorithmic}[1]
\STATE {\bf Given:} Parameter $L$, a $(L, \eps_S)$-testable {\tt bandit}, and a safe {\tt classifier}.
\FOR{phase $j = 1,2,\, \ldots\; $}
\STATE Initialize {\tt bandit}. Let $t_j$ be current round.
\IF{$j$ is odd}
\STATE \COMMENT{testing phase}
\FOR{round $t = t_j\; \ldots\; t_j + L$}
\STATE Select arm $i_t$ according to {\tt bandit}.
\STATE Observe click w.p. $p_t(i_t)$ and update {\tt bandit}.
\ENDFOR
\STATE Let $i$ be the most recent full phase before phase $j$.
\STATE If $G^+_i \cap G^-_j = \emptyset$
			~~~~~\COMMENT{label is \texttt{false}}
\STATE ~~~~~Let $l_{t_j} = $ \texttt{false} and pass training example $(x_{t_j}, l_{t_j})$ to {\tt classifier}.
\ELSE
\STATE \COMMENT{adapting phase}
\FOR{round $t = t_j,\, t_j+1,\, \ldots\; $}
\STATE Select arm $i_t$ according to {\tt bandit}.
\STATE Observe click w.p. $p_t(i_t)$ and update {\tt bandit}; pass context $x_t$ to {\tt classifier}.
\IF{{\tt classifier} predicts ``positive''}
\STATE Terminate inner for loop.
\ENDIF
\ENDFOR
\ENDIF
\ENDFOR
\end{algorithmic}
\end{algorithm}

\subsection{Provable guarantees}

We present provable guarantees for $\ucbc$ in a modular way, in terms of \AlexDiam{}, event-free regret, and the number of events. This is the main technical result in the paper.

\begin{thm}\label{thm:meta-algorithm}
Consider an instance of the eventful bandit problem with number of rounds $T$, $n$ arms, $k$ events and \MinShift{} $\eps_S$; assume that any two events are at least $2L$ rounds apart. Consider algorithm $\ucbc$ with parameter $L$ and components {\tt classifier} and {\tt bandit} \hlc{1}{that are, respectively, safe and $(L,\eps_S)$-testable.}
\hlc{1}{Suppose the event-free regret of {\tt bandit} is bounded from above by a concave function $R_0(\cdot)$. Then the regret of $\ucbc$ is}

\begin{align}\label{eq:meta-algorithm}
	R_{\ucbc}(T) \leq (2k+d_{\FP})\,R_0\left(\tfrac{T}{2k+d_{\FP}}\right) + (k+d_{\FP})\, R_0(L) + kL,
\end{align}
where \hlc{2}{$d_{\FP}$} is the \AlexDiam{} of {\tt classifier}.
\end{thm}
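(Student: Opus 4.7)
The plan is to condition on a high-probability ``good event'' $G$ and reason deterministically. Define $G$ as the intersection, over all bandit phases, of the $(L,\eps_S)$-testability conclusion for that phase's $L$-th round guess. Since each claim fails with probability at most $T^{-2}$ and the run contains at most $T$ phases, a union bound gives $\Pr[\neg G]\le 1/T$; this contributes at most $1$ to the expected regret, so the rest of the argument proceeds under $G$.

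The crux is an induction on phases showing that every label fed to {\tt classifier} is correct. Consider a label $l_{t_j}=\texttt{false}$ generated at the end of testing phase $j$ via $G^+_i\cap G^-_j=\emptyset$, where $i$ is the most recent full phase before $j$. Suppose for contradiction that round $t_j$ is an event. By the inductive hypothesis and safety of {\tt classifier}, every earlier event triggered a positive prediction at its exact round. Combined with the $2L$-separation assumption, the first $L$ rounds of phase $j$ are event-free; the first $L$ rounds of phase $i$ are also event-free, because had an event landed in the first $L$ rounds of an adapting phase, the classifier would have terminated that phase early and it would not qualify as \emph{full}. Thus both guesses are valid on $G$. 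Since no event was missed between phase $i$ and round $t_j-1$, the click distribution is constant on that interval; call it $p=p_{t_j-1}$. Minimum-shift yields an arm $a$ optimal under $p$ and $\eps_S$-suboptimal under $p_{t_j}$; testability of phase $i$ places $a\in G^+_i$ and testability of phase $j$ places $a\in G^-_j$, contradicting $G^+_i\cap G^-_j=\emptyset$. The induction concludes, and safety of {\tt classifier} then rules out any false negative.

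Given label correctness, the phase count is immediate: each adapting phase ends with a positive prediction, which is either a true positive (at most $k$ of these, one per event) or a false positive (at most $d_{\FP}$, by Definition~\ref{def:FP-complexity}), so there are at most $k+d_{\FP}+1$ adapting phases and comparably many testing phases. For the regret bound, treat each phase as an event-free bandit run except for at most $k$ event-containing testing phases (which must be preceded by a false positive landing within $L$ rounds of an event) and the terminal event round of at most $k$ event-detecting adapting phases. Each event-free run of length $\tau$ contributes at most $R_0(\tau)$; applying concavity of $R_0$ to the adapting runs (whose lengths sum to at most $T$), summing testing contributions separately at $R_0(L)$ each, and bounding each event-containing testing phase crudely by its length $L$, yields the three terms in the stated bound.

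The main obstacle is the label-correctness induction: one must exploit the definition of \emph{full} phase to guarantee that phase $i$'s guess is valid, and must trace the absence of missed events from phase $i$ all the way to round $t_j-1$ in order to identify the single distribution $p$ against which both phases are effectively playing. Once that is established, the phase counting via safety and \AlexDiam{}, and the regret bookkeeping via concavity, are routine.
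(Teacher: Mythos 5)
Your overall strategy matches the paper's: condition on a high-probability event under which the $L$-th round guesses of event-free runs are reliable, prove by induction that every label fed to {\tt classifier} is correct, count phases, and finish with concavity. Two steps need repair.

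First, the label-correctness induction leans on the assertion that ``every earlier event triggered a positive prediction at its exact round.'' This is false for events that land inside a \emph{testing} phase: during testing phases the pseudocode never consults {\tt classifier}, so such events are silently missed. Relatedly, your justification for the event-freeness of phase $i$'s first $L$ rounds only covers the case where $i=j-1$ is a full adapting phase; when the adapting phase $j-1$ is short, the most recent full phase is the testing phase $i=j-2$, and the ``classifier would have terminated early'' argument does not apply. That case can be closed differently (since phase $j-1$ lasted fewer than $L$ rounds, $t_{j-2}>t_j-2L$, so the $2L$-separation from the event at $t_j$ rules out any other event in $[t_{j-2},t_j)$ --- this is exactly how the paper handles it), but as written it is unaddressed.

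Second, and more seriously, the phase count is wrong. You bound the number of positive predictions by $k$ true positives plus $d_{\FP}$ false positives ``by Definition~\ref{def:FP-complexity}.'' But $d_{\FP}$ bounds the number of false positives only in a game where \emph{every} false positive is answered with a correct label. In the algorithm, feedback is generated only when $G^+_i\cap G^-_j=\emptyset$; if an event falls inside testing phase $j$ after round $t_j$, or inside phase $j-2$, the guess is contaminated, $l=\texttt{true}$ is produced, and the false positive at $t_j$ receives no feedback --- so it does not count against $d_{\FP}$, and the classifier is free to repeat it. These unpunished false positives must be charged to events: each event can be blamed by at most two testing phases (the one containing it and the one two later), which is precisely where the $2k$ in the theorem's $2k+d_{\FP}$ comes from. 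Your count of $k+d_{\FP}+1$ adapting phases is not justified by your argument; the correct accounting is the paper's case split on the generated label $l$ (at most $d_{\FP}$ phases with $l=\texttt{false}$, at most $2k$ with $l=\texttt{true}$, of which at most $k$ are event-free).
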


\begin{note}{\hlc{1}{Remark.}}
We define a safe classifier whose \AlexDiam{} is bounded in terms of some properties of the underlying concept class (see Section~\ref{sec:safe}). Our instantiation of {\tt bandit} is $(L,\eps_S)$-testable for
    $L = \Theta(\tfrac{n}{\eps_S^2} \log T)$,
with concave event-free regret matching that of $\ucb$ (see Section~\ref{sec:UCB}).
\end{note}

\begin{note}{\hlc{1}{Remark.}}
The right-hand side of~\refeq{eq:meta-algorithm} can be parsed as follows. The three summands in~\refeq{eq:meta-algorithm} correspond to contributions of, respectively, adapting phases, event-free testing phases, and testing phases during which an event has occurred. For the first summand, we show that $\ucbc$ incurs regret $R_0(t)$ for each adapting phase of length $t$, bound the number of adapting phases by \hlc{2}{$2k+d_{\FP}$}, and then bound the total contribution of all such phases using concavity. For the second summand, we bound the number of clean testing phases by \hlc{2}{$k+d_{\FP}$}, and note that each such phase contributes at most $R_0(L)$ to regret. For the third summand, each ``eventful" testing phase contributes at most $L$ to regret, and we show that there can be at most $k$ such phases.\footnote{In fact, the $k$ in the $+kL$ term in~\refeq{eq:meta-algorithm} can be replaced by the (potentially much smaller) number of testing phases that contain both a false positive in round $1$ of the phase and an actual event later in the phase.}
\end{note}

\begin{note}{\hlc{1}{Remark.}}
Assuming that any two events are at least $2L$ rounds apart ensures that of any two consecutive phases, one much be event-free. This, in turn, let us invoke the $(L,\eps_S)$-testability of {\tt bandit}.
\end{note}

\begin{proof}[Overview of the proof]
\hlc{1}{The essential difficulty the analysis of $\ucbc$ is that an event might happen while the algorithm is testing for another (suspected) event. The corresponding technical difficulty} is that the correct operation of the components of $\ucbc$ --- {\tt classifier} and {\tt bandit} --- is interdependent, \hlc{1}{so one needs to be careful to avoid a circular argument.} In particular, one challenge is to handle events that occur during the first $L$ rounds of a phase; these events may potentially ``contaminate" the $L$-th round guesses and cause incorrect feedback to {\tt classifier}.

First, \hlc{1}{we argue away the probabilistic nature of the problem}. We focus on a given testing phase $j$. For each of the two preceding phases $i\in \{ j-1, j-2\}$, consider the number \hlc{1}{$N$} of events between the first round of phase $i$ and the first round of phase $j$. We would like to \hlc{1}{establish the following \emph{separation property}: that we can separate (tell apart) the cases of $N=0$ and $N=1$} using the testing condition in Line 10 of the pseudocode. Capitalizing on $(L,\eps_S)$-testability, we define a technical condition which \hlc{1}{implies the separation property} with very high probability. Regret incurred if \hlc{1}{the implication ``technical condition $\Rightarrow$ separation property" \footnote{In the full proof, this implication is called \emph{well-detectability}.} fails to hold} is negligible. Thus we can assume that \hlc{1}{this implication holds always}, and argue deterministically from now on.

\hlc{1}{It is worth noting that we consider \emph{two} preceding phases $i\in \{ j-1, j-2\}$ because either can be used in in Line 10 of the pseudocode (depending on whether phase $j-1$ is full). A crucial point here is that one of these two phases must be event-free.}

Second, we argue that {\tt classifier} receives only correctly labeled samples. We do it in two steps. Using the well-detectable property, we show  that if {\tt classifier} receives an incorrectly labeled sample after some testing phase $j$, then an event must have occurred during the (adapting) phase $j-1$. Then using the safety property of {\tt classifier}, we prove that \hlc{1}{each adapting phase is event-free.}

Third, we bound from above the number of testing and adapting phases, using the maximal number of events and the \AlexDiam{} of the classifier. To this end, we establish that if during a testing phase $j$ there are no events, and furthermore there are no events during the two preceding phases, then in the end of $j$ $\ucbc$ generates a correct label $l = \texttt{false}$. \hlc{1}{Then the regret bound}~\refeq{eq:meta-algorithm} follows easily from the event-free regret of {\tt bandit}.
\end{proof}

Now let us present the full proof which fills the gaps in the above overview.

\begin{proof}[Full Proof]
Let $t_j$ be the first round of phase $j$. Recall that phase $j$ is called \emph{full} if it lasts at least $L$ rounds. For a full phase $j$, let us say that the phase is \emph{\hlc{1}{event-free}} if no events happened during interval $(t_j,\, t_j+L]$, and let $(G^+_j, G^-_j)$ be the $L$-th round guess in this phase. For two full phases $i<j$, let us write \hlc{1}{$i\oplus j$} if and only if $G^+_i \cap G^-_j = \emptyset$. Recall that $i\oplus j$ (as a boolean property) is our algorithm's estimate of whether there was no event in round $t_j$.

A testing phase $j$ is called \emph{well-detectable} if for each phase $i\in \{j-2,j-1\}$ the following property holds: if phases $i$ and $j$ are full and event-free, then: (i) if there are no events in the interval $(t_i, \,t_j]$ then $i\oplus j$, (ii) if in the interval $(t_i, \,t_j]$ there is exactly one event, then $\neg(i \oplus j)$. Since {\tt bandit} is $(L, \eps_S)$-testable, each testing phase $j$ is well-detectable with probability at least $1-2T^{-2}$. Thus, with probability at least $1-\Omega(T^{-1})$ each testing phase is well-detectable. Thus, regret incurred in the case that a phase fails to be well-detectable is negligible. So in the rest of the proof, we will assume that each testing phase is well-detectable.

We claim that if {\tt classifier} receives an incorrectly labeled sample after some testing phase $j$, then an event must have occurred during the (adapting) phase $j-1$. Indeed, by the algorithm specification this sample is $(x_{t_j}, \texttt{false})$, where $t_j$ is the first round of phase $j$. Thus, an event has happened in round $t_j$, and yet we have $i\oplus j$, where $i$ is the most recent full phase before phase $j$. Since each testing phase is well-detectable, it follows that at least one more event happened between the beginning of phase $i$ and the end of phase $j$. Since any two events are at least $2L$ rounds apart, phase $i$ started at some round $t_i<t_j-2L$, and an event has happened in the interval $[t_i,\, t_j-2L)$. To prove the claim, it suffices to show that $i=j-1$. Now, if phase $j-1$ lasted less than $L$ steps, then $i = j-2$ is a testing phase, and so $t_i\geq t_j-2L$, contradiction. Thus phase $j-1$ lasted at least $L$ steps, and so $i = j-1$, claim proved.

\hlc{1}{We claim that all adapting phases are event-free.} For the sake of contradiction, suppose an event occurs during an adapting phase, and let $t$ be the first round at which this happens. We know that {\tt classifier} output a (false) negative in this round, since otherwise a new testing phase would have started at round $t$. Since {\tt classifier} is safe, at some round before $t$ it must have received an incorrectly labeled sample. By the algorithm specification, this must have happened after some testing phase $j$ which ended before round $t$. But then (by the previous claim) an event must have occurred during the (adapting) phase $j-1$, which contradicts the choice of $t$. Claim proved.

From the previous two claims, it follows that {\tt classifier} receives only correctly labeled samples.

We claim that if there are no events during some testing phases $j-2$ and $j$, then at the end of phase $j$ we generate a label $l = \texttt{false}$. Indeed, suppose not. Then $\neg(i\oplus j)$, where $i$ is the most recent full phase before phase $j$. Either $i = j-2$ or $i = j-1$; in either case, $\neg(i\oplus j)$ implies that there is an event in the interval $[t_i, t_j)$. Since there are no events during adapting phases, it follows that $i=j-2$, contradiction. Claim proved.

We claim that there can be at most \hlc{2}{$2k + d_{\FP}$} testing phases (and hence at most as many adapting phases), including at most \hlc{2}{$k + d_{\FP}$} event-free testing phases. Indeed, in the first round of each testing phase $j$ {\tt classifier} generates a ``positive", and in the end of the phase we generate a label
	$l\in\{ \texttt{true}, \texttt{false}\}$.
We examine each case separately: (i) if $l = \texttt{false}$ then {\tt classifier} receives feedback, so there can be at most \hlc{2}{$d_{\FP}$} such phases $j$, (ii) if $l = \texttt{true}$ then an event has occurred in phase $j$ or $j-2$, so there can be at most $2k$ such phases $j$, of which at most $k$ phases can be event-free. Claim proved.

To obtain the regret bound~\refeq{eq:meta-algorithm}, note that regret in each event-free phase of length $t$ is $R_0(t)$, see the second remark after Theorem~\ref{thm:meta-algorithm} for details.
\end{proof}

\OMIT{testing phase is at most $R_0(L)$ if the phase is event-free, and at most $L$ otherwise, and in each adapting phase (since there are no events) regret is at most $R_0(T)$.}

\section{Safe Classifier}
\label{sec:safe}

\newcommand{\safecl}{{\tt SafeCl}}


\hlc{2}{In this section, we show how sa}fe classifiers with low \AlexDiam{} can be constructed for specific concept classes. Recall that a classifier is called safe if (assuming it inputs only correctly labeled samples) it never outputs a false negative, and the definition of \AlexDiam{}, motivated by the specification of the $\ucbc$ algorithm, essentially assumes that all labeled samples correspond to false positives. 

\hlc{2}{We first describe a generic class}ifier, called \safecl{}, that is safe for \emph{any} concept class $\CF$, and bound its \AlexDiam{} using a certain property of $\CF$. In the event that the concept class is all $d$-dimensional axis-parallel hyper-rectangles with margin $1/\delta$, we show that this bound is proportional to $d/\delta$.  And in the event that the concept class is all $d$-dimensional hyperplanes with margin $\delta$, we show that this bound is exponential in $d$.  Unfortunately, the exponential dependence cannot be improved, as we will see in Section \ref{lowerb}.

The classifier \safecl{} is defined as follows. 

\framebox[\columnwidth]{
\parbox{.95\columnwidth}{
\safecl{} classifies a given unlabeled context $x$ as negative if and only if there exists no concept $f \in \CF$ such that $f(x) = +1$ and \hlc{2}{$f(x') = -1$ for each} {\tt false}-labeled example \hlc{2}{$x'$} received so far.
}}

It is easy to see that this classifier is indeed safe. Moreover, we bound its \AlexDiam{} in terms of the following property of the concept class $\CF$:

\begin{defn}\label{def:diam}
The \emph{diameter} of $\CF$, denoted $d_\CF$, is equal to the length of the longest sequence $x_1, \ldots, x_m \in \CX$ such that for each $t = 1, \ldots, m$ there exists a concept $f \in \CF$ with the following property: $f(x_t) = +1$, and $f(x_s) = -1$ for all $s<t$.
\end{defn}

\begin{claim}
\safecl{} is safe, and its \AlexDiam{} is at most $d_\CF$.
\end{claim}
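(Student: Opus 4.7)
The plan is to verify the two claims — safety and the $d_\CF$ bound on FP-complexity — directly from the definitions, without any probabilistic or inductive machinery. Neither step looks deep; the only care required is to track which labeled samples the classifier has seen at the moment of each prediction.

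For safety, I would argue by contradiction. Suppose \safecl{} is fed only correctly labeled samples and at some round outputs a false negative on a context $x$. Let $f^* \in \CF$ denote the true event oracle, so $f^*(x) = +1$. Because every received labeled example $(x',l')$ is correctly labeled and the classifier can only receive \texttt{false}-labeled samples in this game (labels arrive only when the prediction was a false positive, hence the true label is negative), we have $f^*(x') = -1$ on every such $x'$. But then $f^*$ itself is a concept in $\CF$ satisfying the defining condition in the boxed rule for $x$, so the rule would have output ``positive'' at $x$. This contradicts the assumption of a false negative, establishing safety.

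For the FP-complexity bound, I would enumerate the false positives in the order they occur. Let $x_{t_1}, \ldots, x_{t_m}$ be the contexts on which \safecl{} produces a false positive, listed in the order of the game, and recall that after each such round the classifier receives the correct label, which in this situation is \texttt{false}. At the moment \safecl{} classifies $x_{t_i}$ as positive, the \texttt{false}-labeled examples it has already received include $x_{t_1}, \ldots, x_{t_{i-1}}$ (and possibly other \texttt{false}-labeled samples from the past, though we will not need them). By the boxed rule, there exists a concept $f_i \in \CF$ with $f_i(x_{t_i}) = +1$ and $f_i(x') = -1$ on all previously received \texttt{false}-labeled samples; in particular, $f_i(x_{t_s}) = -1$ for all $s < i$. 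Therefore the sequence $x_{t_1}, \ldots, x_{t_m}$ is exactly the type of sequence whose length is bounded by $d_\CF$ in Definition~\ref{def:diam}, so $m \leq d_\CF$.

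The main obstacle, modest as it is, lies in being explicit that in the FP-complexity game \emph{every} label the classifier ever sees is \texttt{false} (equivalently, a negative label), so that the hypothesis of the boxed rule and the hypothesis of Definition~\ref{def:diam} line up. Once that observation is made, both parts are short direct arguments and no further technical work is needed.
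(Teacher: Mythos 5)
Your proposal is correct and follows essentially the same argument as the paper: safety via the observation that the true oracle would itself witness a positive classification if all received labels are correct, and the FP-complexity bound by noting that the ordered sequence of false positives satisfies exactly the property in Definition~\ref{def:diam}. Your version is somewhat more explicit than the paper's (in particular about the witness concept being the true oracle $f^*$ and about all received labels being \texttt{false}), but the underlying reasoning is identical.
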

\begin{proof}
Assume all {\tt false}-labeled examples input by \safecl{} are correctly labeled. Suppose \safecl{} outputs a false negative, with concept $f\in\CF$ and unlabeled sample $x$. Then $f(x) = +1$ and \hlc{2}{$f(x')=-1$ for each} {\tt false}-labeled example \hlc{2}{$x'$} received so far. But by definition of  \safecl{} such concept does not exist, contradiction. Therefore, \safecl{} is safe. Regarding the \AlexDiam, consider the prediction game in Definition~\ref{def:FP-complexity}. Any sequence $x_1, \ldots, x_m$ of false positives output by \safecl{} satisfies the property in Definition~\ref{def:diam}, so $m \leq d_\CF$.
\end{proof}

\OMIT{ 
\begin{defn} Define the \emph{safe function} $S_\CF : 2^\CX \goes 2^\CX$ of $\CF$ as follows: $x \in S_\CF(N)$ if and only if there is no concept $f \in \CF$ such that: $f(y) = -1$ for all $y \in N$ and $f(x) = +1$.
The \emph{diameter} of $\CF$, denoted $d_\CF$, is equal to the length of the longest sequence $x_1, \ldots, x_m \in \CX$ such that $x_t \notin S_\CF(\{x_1, \ldots, x_{t-1}\})$ for all $t = 1, \ldots, m$.
\end{defn}
} 

By using \safecl{} as our classifier, we introduce $d_\CF$ into the regret bound of ${\tt bwc}$, and this quantity can be large. However, in Section \ref{lowerb} we show that the regret of \emph{any} algorithm must depend on $d_\CF$, unless it depends strongly on the number of rounds $T$.

\OMIT{ 
Moreover, we give examples of common concept classes with efficiently computable safe functions. For example, if $\CF$ is the space of hyperplanes with ``margin'' at least $\delta$ (probably the most commonly-used concept class in machine learning), then $S_\CF(N)$ is the convex hull of the examples in $N$, extended in all directions by a $\delta$.}

\newcommand{\hyp}{\CF_{\texttt{HYP($d,\delta$)}}}
\newcommand{\apr}{\CF_{\texttt{APR($d,\delta$)}}}
\newcommand{\mynull}{\texttt{null}}
\newcommand{\myhull}{\texttt{Co}}

Below we give examples of common concept classes with efficiently computable safe functions, and prove bounds on their diameter.  Recall that for a given universe $\CX$ of examples, a concept is a function
	$f : \mathcal{X} \to \{-1, +1, \mynull\}$,
where the \mynull{} value refers to the examples that are not feasible under a given concept \hlc{2}{(i.e., if $f$ is the true concept, then we will never observe an example $x$ such that $f(x) = \mynull$).}

In what follows, for each $N\subset \CX$ define $S_\CF(N)\subset \CX$ as the set of all $x\in \CX$ for which there is no concept $f \in \CF$ such that $f(x) = +1$ and \hlc{2}{$f(x') = -1$ for each $x' \in N$}. Note that \safecl{} outputs a negative prediction on $x$ if and only if $x\in S_\CF(N)$, where $N$ is the set of {\tt false}-labeled samples received so far. Likewise, in Definition~\ref{def:diam} the sequence $\{x_t\}$ satisfies
	$x_t \notin S_\CF(\{x_1, \ldots, x_{t-1}\})$
for each $t$.

For convenience, define a ``$\delta$-ball" around a set $S\subset\bbR^d$ in the $d$-dimensional $L_p$-norm as
\begin{align*}
	\bbB^d_p(S,\delta) \triangleq
		\{x\in \bbR^d:\, L_p(x,S) \le \delta \},
		\;\text{where}\;
L_p(x,S) \triangleq \textstyle{\min_{y\in S}}\, \norm{x-y}_p.
\end{align*}
Here $L_p(x,S)$ is the $L_p$-norm distance between a point $x$ and a set $S$.

\subsection{Axis-parallel rectangles with margin $\delta$}

\hlc{2}{One very simple concept is an axis-parallel hyper-rectangle. This type of concept can be used to test whether any one of several features is outside of its `normal' range. This is a particularly well-suited concept class for predicting events that may affect a search engine query, since these events are typically preceded by a large change in some statistic related to the query, such as its volume or abandonment rate.}

Fix the dimension $d$, and let $\CX \subseteq \bbR^d$ be the $d$-dimensional $L_\infty$-norm unit ball around the origin. A \emph{$d$-rectangle} in $\bbR^d$ is the cross-product of $d$ non-empty intervals in $\bbR$. Given $\delta > 0$ and a $d$-rectangle $R$, define a function
	$f_{R,\delta}:\CX \goes \{-1, +1, \mynull\}$
as follows: $f_{R,\delta}(x)$ equals $+1$ if $L_\infty(x,R) \ge \delta$; it equals $-1$ if $x \in R$, and it equals $\mynull$ otherwise \hlc{2}{(note that the margin $\delta$ only applies only outside of $R$)}. The concept class of $d$-dimensional \emph{axis-parallel rectangles with margin $\delta$} is defined as
	$\apr = \{f_{R,\delta}:\; \text{all $d$-rectangles $R$} \} $.

We bound the diameter of \hlc{2}{$\apr$} as follows.

\begin{claim}
If $\CF = \apr$, then $d_\CF \le O(d/\delta)$.
\end{claim}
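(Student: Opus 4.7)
The plan is to reduce Definition~\ref{def:diam} for $\CF = \apr$ to a clean geometric statement about growing axis-aligned bounding boxes, and then finish with a one-line potential argument.

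\textbf{Step 1 (geometric reformulation).} Let $B_{t-1}$ denote the smallest axis-aligned bounding box of $\{x_1,\ldots,x_{t-1}\}$. I would show that the condition in Definition~\ref{def:diam} --- existence of $f_{R,\delta}\in\apr$ with $f_{R,\delta}(x_s) = -1$ for all $s<t$ and $f_{R,\delta}(x_t) = +1$ --- is equivalent to $L_\infty(x_t, B_{t-1}) \ge \delta$. The ``if'' direction simply takes $R = B_{t-1}$ itself as the witnessing rectangle. For ``only if,'' any witnessing $R$ must contain every $x_s$ with $s<t$, so $R \supseteq B_{t-1}$; enlarging a set only decreases distance from an external point, so $L_\infty(x_t, B_{t-1}) \ge L_\infty(x_t, R) \ge \delta$.

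\textbf{Step 2 (potential argument).} Write $B_t = \prod_{i=1}^d [a_i^t, b_i^t]$ and define the potential
\[ \Phi_t = \sum_{i=1}^d (b_i^t - a_i^t). \]
Since $L_\infty$-distance from a point to an axis-aligned rectangle is the maximum of its coordinate-wise distances, the inequality $L_\infty(x_t, B_{t-1}) \ge \delta$ forces $x_t$ to lie outside $B_{t-1}$ by at least $\delta$ in at least one coordinate. Hence $B_t$ extends $B_{t-1}$ by at least $\delta$ in that coordinate, while no coordinate width can shrink; therefore $\Phi_t \ge \Phi_{t-1} + \delta$ for every $t \ge 2$. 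Since $\CX$ is the unit $L_\infty$-ball, each width $b_i^t - a_i^t$ is at most $2$, so $\Phi_t \le 2d$ at all times. Combined with $\Phi_1 = 0$, this gives $(m-1)\delta \le 2d$, i.e., $m \le 2d/\delta + 1 = O(d/\delta)$.

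I do not anticipate a real obstacle. The only subtle point is the ``only if'' direction in Step~1 --- verifying that $R \supseteq B_{t-1}$ implies $L_\infty(x_t, R) \le L_\infty(x_t, B_{t-1})$, which is immediate since the infimum defining distance to a set is taken over a larger set when the set is enlarged. After that, the decoupling of the $L_\infty$-norm across coordinates is what makes $\Phi_t$ strictly monotone, and the rest is arithmetic.
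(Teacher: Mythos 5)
Your proposal is correct and is essentially the same argument as the paper's: the paper also tracks the smallest bounding $d$-rectangle of the prefix and observes that each new point in the sequence forces at least one defining interval to grow by $\delta$, with total growth bounded by $2d$ since $\CX$ is the $L_\infty$ unit ball. Your write-up just makes the equivalence in Step~1 and the potential-function bookkeeping explicit, which the paper leaves implicit.
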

\begin{proof} Consider a sequence $x_1, \ldots, x_m \in \CX$ such that $x_t \notin S_\CF(\{x_1, \ldots, x_{t-1}\})$ for all $1 \le t \le m$. Let $R_t$ be the $\delta$-ball in $L_\infty$ around the smallest $d$-rectangle containing $x_1, \ldots, x_t$. By definition of the sequence, at least one of the one-dimensional intervals defining $R_{t+1}$ must be $\delta$ larger than the same interval in $R_t$. Since $\norm{x_t}_\infty \le 1$, $m \le O(d/\delta)$.
\end{proof}

\hlc{2}{Clearly, for the concept class $\apr$, the classifier \safecl{} simply maintains the smallest $d$-dimensional rectangle $R(N)$ containing the set of all previously {\tt false}-labeled examples $N$, and classifies a new example $x$ as negative if and only if $x$ lies within $\delta$ (measured in $L_\infty$-norm) of $R(N)$. In other words} 

\begin{center}
\framebox[.90\columnwidth][c]{
\parbox{.80\columnwidth}{
\safecl{} on $\apr$: classify $x\in \CX$ as negative $\iff$ $x\in \bbB^d_\infty(R(N),\,\delta)$,\\ where $N$ is the set of all {\tt false}-labeled examples received so far.
}}
\end{center}

\subsection{Hyperplanes with margin $\delta$}

\hlc{2}{Hyperplanes are perhaps the most widely-used concept in classification problems.} Fix the dimension $d$, and let $\CX \subseteq \bbR^d$ be the $d$-dimensional $L_2$-norm unit ball around the origin. Given $u,w\in \bbR^d$ and $\delta > 0$, define a function
	$f_{u,w,\delta} : \CX \goes \{-1, +1, \mynull\}$
as follows: $f_{u,w,\delta}(x)$ equals $+1$ if $w \cdot (x+u) \ge \delta$, it equals $-1$ if $w \cdot (x+u) < -\delta$, and it equals $\mynull$ otherwise. Here $w$ is the unit normal of the hyperplane, and $u$ is the shift vector. The concept class of $d$-dimensional \emph{hyperplanes with margin $\delta$} is defined as
\begin{align*}
	\hyp = \{ f_{u,w,\delta}:\;
		u,w\in\bbR^d,\, \norm{u}_2 \le 1, \norm{w}_2=1 \}.
\end{align*}
We bound the diameter of $\hyp$ as follows:

\begin{claim}
If $\CF = \hyp$, then $d_\CF \le (1+\tfrac{1}{\delta})^d$.
\end{claim}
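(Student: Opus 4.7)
The plan is to convert the sequential separability property defining the diameter into a simple pairwise geometric separation, and then finish with a standard volume-packing argument in the $L_2$ unit ball.

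\textbf{Step 1: Pairwise separation.} I would first show that any sequence $x_1,\ldots,x_m \in \CX$ witnessing the diameter must be pairwise $2\delta$-separated in $L_2$. Fix any indices $s<t$. By the definition of $d_\CF$, there exist $u,w \in \bbR^d$ with $\norm{w}_2 = 1$ and $\norm{u}_2 \leq 1$ such that $f_{u,w,\delta}(x_t) = +1$ and $f_{u,w,\delta}(x_s) = -1$; unpacking the definition of $f_{u,w,\delta}$, this gives $w \cdot (x_t + u) \geq \delta$ and $w\cdot(x_s+u) < -\delta$. Subtracting,
\[
    w \cdot (x_t - x_s) > 2\delta,
\]
and Cauchy--Schwarz together with $\norm{w}_2 = 1$ yields $\norm{x_t - x_s}_2 > 2\delta$. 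Note that the shift vector $u$ and its norm constraint play no role in this step; only the existence of the normal direction $w$ is used.

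\textbf{Step 2: Volume packing.} Place an open $L_2$-ball of radius $\delta$ around each $x_i$. By Step 1 these balls are pairwise disjoint, and since each $x_i$ lies in the unit ball $\CX$, each such ball is contained in the concentric ball of radius $1+\delta$ around the origin. Comparing $d$-dimensional Lebesgue volumes,
\[
    m \cdot \mathrm{vol}(\bbB^d_2(\{0\},\delta)) \;\leq\; \mathrm{vol}(\bbB^d_2(\{0\},1+\delta)),
\]
and dividing through by $\mathrm{vol}(\bbB^d_2(\{0\},\delta))$ gives
\[
    m \;\leq\; \left(\tfrac{1+\delta}{\delta}\right)^d \;=\; \left(1 + \tfrac{1}{\delta}\right)^d,
\]
which is the claimed bound.

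\textbf{Difficulty.} I do not expect a real obstacle: once Step 1 is in hand the argument is essentially a textbook packing computation. The only subtlety is making sure the margin condition is read off correctly from both sides of the hyperplane so that the factor $2\delta$ (rather than $\delta$) appears in the separation, which is exactly what makes the bound $(1+1/\delta)^d$ rather than something worse.
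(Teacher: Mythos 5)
Your proposal is correct and matches the paper's own argument essentially step for step: the same pairwise $2\delta$-separation derived from the margin condition on both sides of the separating hyperplane (the paper invokes H\"{o}lder's inequality where you invoke Cauchy--Schwarz, which coincide here), followed by the same packing of disjoint radius-$\delta$ balls inside the radius-$(1+\delta)$ ball. No further comment is needed.
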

\begin{proof}
Consider a sequence $x_1, \ldots, x_m \in \CX$ such that $x_t \notin S_\CF(\{x_1, \ldots, x_{t-1}\})$ for all $1 \le t \le m$, as in Definition~\ref{def:diam}. Then for each $s$ and $t$ such that
	$1 \le s < t \le m$
there exist $u,w \in \bbR^d$ such that $\norm{u}_2 \le 1$, $\norm{w}_2=1$,
	$w\cdot (x_t+u) \geq \delta$
and
	$w\cdot (x_s+u) < -\delta$.
By H\"{o}lder's inequality, it follows that
\begin{align}\label{eq:balls-dont-overlap}
\norm{x_t-x_s}_2
	= \norm{w}_2\, \norm{x_t-x_s}_2
	\geq w\cdot (x_t-x_s)
	> 2\delta.
\end{align}
Now, place an $L_2$-ball of radius $\delta$ around each point $x_t$. By~\refeq{eq:balls-dont-overlap}, none of these balls can intersect. A radius-$r$ ball in $d$ dimensions has volume $C_d\, r^d$, where $C_d$ is a constant that depends only on $d$. Thus the total volume of the balls is $m\, C_d\, \delta^d$. On the other hand, $\norm{x_t}_2 \le 1$ for each $t$, so each of these balls lies in the radius-$(1+\delta)$ ball around the origin, so their total volume is at most $C_d\, (1+\delta)^d$. It follows that
	$m\leq (1+\tfrac{1}{\delta})^d$.
\end{proof}

We now show that there is a computationally efficient way to implement the classifier \safecl{} for hypothesis class $\hyp$. \hlc{2}{Specifically, we show that the classifier $\safecl{}$ simply maintains the convex hull $\myhull(N)$ of all previously {\tt false}-labeled examples $N$, classifies a new example $x$ as negative if and only if $x$ lies within $2\delta$ (measured in $L_2$-norm) of $\myhull(N)$. In other words}

\begin{center}
\framebox[.90\columnwidth][c]{
\parbox{.80\columnwidth}{
\safecl{} on $\hyp$: classify $x\in \CX$ as negative $\iff$ $x\in \bbB^d_2(\myhull(N),\,2\delta)$,\\ where $N$ is the set of all {\tt false}-labeled examples received so far.
}}
\end{center}

\begin{claim} If $\CF = \hyp$ and $N \subset \CX$ then $S_\CF(N) = \CX \cap \bbB^d_2(\myhull(N),\,2\delta)$, where $\myhull(N)$ is the convex hull of $N$.\end{claim}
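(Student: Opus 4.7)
My plan is to prove the two inclusions of the set equality separately. For the direction $\supseteq$, I fix $x \in \CX \cap \bbB^d_2(\myhull(N), 2\delta)$ and suppose toward contradiction that some $f_{u,w,\delta} \in \hyp$ witnesses $x \notin S_\CF(N)$, i.e., $w\cdot (x+u) \geq \delta$ and $w\cdot (x'+u) < -\delta$ for every $x' \in N$. Taking a convex combination of the strict bound over $x'\in N$ extends it to $w\cdot (y+u) < -\delta$ for every $y\in \myhull(N)$, so $w\cdot (x-y) > 2\delta$. By Cauchy--Schwarz $\|x-y\|_2 > 2\delta$ for every $y\in\myhull(N)$, and since $\myhull(N)$ is compact (a polytope on the finite set $N$), the infimum $L_2(x,\myhull(N))$ is attained and strictly exceeds $2\delta$, contradicting $x\in\bbB^d_2(\myhull(N),2\delta)$.

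For the direction $\subseteq$ I argue the contrapositive: assuming $x\in\CX$ and $L_2(x,\myhull(N)) > 2\delta$, I construct a concept $f_{u,w,\delta}\in\hyp$ satisfying $f(x)=+1$ and $f(x')=-1$ for all $x'\in N$. Let $y_0\in\myhull(N)$ be the nearest point to $x$, set $d=\|x-y_0\|_2 > 2\delta$ and $w = (x-y_0)/d$, so $\|w\|_2=1$. The first-order projection inequality $w\cdot(y-y_0) \leq 0$ for every $y\in \myhull(N)$ implies that $\alpha := w\cdot x$ and $\beta := \max_{x'\in N} w\cdot x'$ obey $\alpha - \beta \geq d > 2\delta$. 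I then pick a scalar $a$ in the interval $[\delta-\alpha,\,-\delta-\beta)$ that additionally satisfies $|a|\leq 1$, and set $u = a w$; then $\|u\|_2 = |a|\leq 1$, and by the defining inequalities $w\cdot (x+u) = \alpha + a \geq \delta$ and $w\cdot (x'+u) \leq \beta + a < -\delta$ for every $x'\in N$, producing the desired concept.

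The main obstacle is to verify that the interval $[\delta-\alpha,\,-\delta-\beta)\cap[-1,1]$ is always nonempty --- that is, that the $\|u\|_2\leq 1$ constraint, which restricts the separating hyperplane to pass through the unit ball, is compatible with the margin requirement. Since $x,x'\in\CX$ and $\|w\|_2=1$, Cauchy--Schwarz forces $\alpha,\beta\in[-1,1]$; together with $\alpha-\beta > 2\delta$ this yields $\alpha > -1+2\delta$ and $\beta < 1-2\delta$, whence $\delta-\alpha < 1-\delta \leq 1$ and $-\delta-\beta > \delta-1 \geq -1$ (assuming $\delta\leq 1$, the only interesting regime). A short case split on whether either endpoint exits $[-1,1]$ then shows the intersection is always a nondegenerate interval, completing the construction and hence the proof.
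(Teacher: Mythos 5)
Your proof is correct, and while it follows the same two-inclusion decomposition as the paper, both halves are argued somewhat differently. For the inclusion $\bbB^d_2(\myhull(N),2\delta)\cap\CX\subseteq S_\CF(N)$, the paper also argues by contradiction but routes through the intermediate value theorem, producing a point $x'$ on the segment from $x$ to the nearest hull point with $w\cdot(x'+u)=0$ and then applying H\"{o}lder twice; your direct application of Cauchy--Schwarz to $w\cdot(x-y)>2\delta$ for all $y\in\myhull(N)$ gets the same contradiction in one line and is arguably cleaner. For the reverse inclusion, the paper invokes the separating hyperplane theorem on the thickened, disjoint convex sets $\bbB^d_2(\{x\},\delta)$ and $\bbB^d_2(\myhull(N),\delta+\eps)$ and then extracts $(u,w)$ from the resulting affine separator, whereas you build $w$ explicitly as the unit vector from the nearest-point projection $y_0$ (using the obtuse-angle property of projections onto convex sets) and then solve directly for an admissible shift $a$ with $u=aw$. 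The delicate point in both treatments is verifying that the shift vector can be taken with $\norm{u}_2\le 1$: the paper shows $|b/\norm{a}_2|\le 1$, and you show the interval $[\delta-\alpha,\,-\delta-\beta)\cap[-1,1]$ is nonempty; your bounds $\delta-\alpha<1-\delta$ and $-\delta-\beta>\delta-1$ do establish this. Your construction is more elementary and explicit (no appeal to the general separation theorem), at the cost of a small case analysis; both implicitly assume $N$ is finite (so that the hull is closed and the max over $N$ is attained) and that $\delta\le 1$, which is the only nonvacuous regime --- the paper makes the same implicit assumptions.
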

\begin{proof} Fix $x_t \in \CX$. We divide the \hlc{2}{proof} into two parts. \hlc{2}{First, we show that if $x_t$ is contained in the $2\delta$-ball around $\myhull(N)$, then no hyperplane in $\CF$ can separate $x_t$ from $N$. Next, we show that if $x_t$ is outside the $2\delta$-ball around $\myhull(N)$, then at least one hyperplane in $\CF$ separates $x_t$ from $N$. More precisely, we prove that}
\begin{enumerate}
	\item[(i)] If $x_t \in \bbB^d_2(\myhull(N),\,2\delta)$ then there does not exist $f \in \hyp$ such that \hlc{2}{$f(x_s) = -1$} for all $x_s \in N$ and $f(x_t) = +1$.
	\item[(ii)] If $x_t \notin \bbB^d_2(\myhull(N),\,2\delta)$ then there exists $f \in \hyp$ such that $f(x_s) = -1$ for all $x_s \in N$ and $f(x_t) = +1$.
\end{enumerate}


Proof of (i): Suppose for contradiction that there exist $u, w \in \bbR^d$, with $\norm{u}_2 \le 1$ and $\norm{w}_2 = 1$, such that $w \cdot (x_s + u) < -\delta$ for all $x_s \in N$ and $w \cdot (x_t + u) \ge \delta$. 

Choose $x^* \in \myhull(N)$ so that $\norm{x_t - x^*}_2 = L_2(x_t, \myhull(N))$, i.e. $x^*$ is a closest point in $\myhull(N)$ to $x_t$ (we know $x^*$ exists because $\myhull(N)$ is closed). Since $x_t \in \bbB^d_2(\myhull(N),\,2\delta)$, we have that $\norm{x_t - x^*}_2 \le 2\delta$.

We know that $w \cdot (x^* + u) < -\delta$ because $x^*$ is a convex combination of the examples in $N$. Therefore, by the intermediate value theorem, there exists $x' \in \CX$ and $\theta \in [0, 1]$ such that $x' = (1 - \theta) x_t + \theta x^*$ and $w \cdot (x' + u) = 0$.


Some algebra shows that $\norm{x_t - x'}_2 = \theta \norm{x_t - x^*}_2$ and $\norm{x' - x^*}_2 = (1-\theta)\norm{x_t - x^*}_2$. Adding these equations yields
$$
\norm{x_t - x'}_2 + \norm{x' - x^*}_2 = \norm{x_t - x^*}_2
$$


Because $w \cdot (x' + u) = 0$, by H\"{o}lder's inequality we have
$$
\norm{x_t - x'}_2 = \norm{w}_2 \norm{x_t - x'}_2 \ge w \cdot (x_t -  x') = w \cdot (x_t + u) - w \cdot (x' + u) \ge \delta
$$
and
$$
\norm{x' - x^*}_2 = \norm{w}_2 \norm{x' - x^*}_2 \ge w \cdot (x' -  x^*) = w \cdot (x' + u) - w \cdot (x^* + u) > \delta
$$
which implies $\norm{x_t - x^*}_2 > 2\delta$, which is a contradiction.

\newcommand{\interior}{\textrm{\tt int}}

Proof of (ii): We will use the well-known \emph{separating hyperplane theorem} \cite{Rock}: If nonempty convex sets $X, Y \in \bbR^d$ do not intersect, then there exist $a \in \bbR^d \setminus \{0\}$ and $b \in \bbR$ such that 
\be
a \cdot x \ge b \textrm{ for all }x \in X\textrm{ and }a \cdot y \le b\textrm{ for all }y \in Y \label{eq:sephyp}
\ee
Since $x_t \notin B^d_2(\myhull(N), 2\delta)$ there must exist $\eps > 0$ such that the sets $X = B^d_2(\{x_t\}, \delta)$ and $Y = B^d_2(\myhull(N), \delta + \eps)$ do not intersect. For these choices for $X$ and $Y$, let us fix $a \in \bbR^d \setminus \{0\}$ and $b \in \bbR$ that satisfy \eqref{eq:sephyp}.

Note that $x_t + z \in X$ for all $z \in \bbR^d$ such that $\norm{z}_2 \le \delta$. Also note that $x_s + z \in Y$ for all $x_s \in N$ and $z \in \bbR^d$ such that $\norm{z}_2 \le \delta + \eps$. So by \eqref{eq:sephyp} we have
$$
a \cdot \left(x_t - \delta\frac{a}{\norm{a}_2}\right) \ge b\textrm{ and }a \cdot \left(x_s + (\delta + \eps)\frac{a}{\norm{a}_2}\right) \le b\textrm{ for all }x_s \in N
$$
Letting $w = \frac{a}{\norm{a}_2}$ and rearranging we have
\be
w \cdot x_t \ge \frac{b}{\norm{a}_2} + \delta\textrm{ and }w \cdot x_s \le \frac{b}{\norm{a}_2} - (\delta + \eps)\textrm{ for all }x_s \in N\label{eq:without_shift}
\ee
Since $\norm{w}_2 = 1$ and $\norm{x}_2 \le 1$ for all $x \in \CX$, it follows from \eqref{eq:without_shift} that $\left|\frac{b}{\norm{a}_2}\right| \le 1$. Thus there exists $u \in \bbR^d$ such that $\norm{u}_2 \le 1$ and $w \cdot u = -\frac{b}{\norm{a}_2}$. It now follows that
$$
w \cdot (x_t + u) \ge \delta\textrm{ and }w \cdot (x_s + u) \le -\delta - \eps\textrm{ for all }x_s \in N
$$
So the function $f_{u,w,\delta} \in \hyp$ satisfies the claim.
\end{proof}


\section{Testable Bandit Algorithms}
\label{sec:UCB}

In this section we will consider the stochastic $n$-armed bandit problem. We are looking for $(L,\eps)$-testable algorithms with low regret. The $L$ will need to be sufficiently large, on the order of  $\Omega(n\eps^{-2})$.

A natural candidate would be algorithm $\ucb$ from~\cite{bandits-ucb1} which does very well on event-free regret:
\begin{align}\label{eq:regret-UCB1}
    R_0(L) \leq O(\min(\tfrac{n}{\Delta} \log L, \; \sqrt{nL\log L})).
\end{align}
Unfortunately, $\ucb$ does not immediately provide a way to define the $t$-th round best guess $(G^+,G^-)$ so as to guarantee $(L,\eps)$-testability. One simple fix is to choose an arm at random in each of the first $L$ rounds, use these samples to form the best guess, in a straightforward way, and then run $\ucb$. However, in the first $L$ rounds this algorithm incurs regret of $\Omega(L)$, which is very suboptimal \hlc{1}{compared to $R_0(L)$ from}~\refeq{eq:regret-UCB1}.

In this section, we develop an algorithm which has the same regret bound as $\ucb$, and is $(L,\eps)$-testable. We state this result more generally, in terms of estimating expected payoffs; we believe it may be of independent interest. The $(L,\eps)$-testability is then an easy corollary.

Since our analysis in this section is for the event-free setting, we can drop the subscript $t$ from much of our notation. Let $p(u)$ denote the (time-invariant) expected payoff of arm $u$. Let $p^* = \max_u p(u)$, and let
	$\Delta(u) = p^* - p(u)$
be the ``suboptimality'' of arm $u$. For round $t$, let $\mu_t(u)$ be the sample average of arm $u$, and let $n_t(u)$ be the number of times arm $u$ has been played.

We will use a slightly modified algorithm $\ucb$ from~\cite{bandits-ucb1}, with a significantly extended analysis. Recall that in each round $t$ algorithm $\ucb$ chooses an arm $u$ with the highest \emph{index}
	$I_t(u) = \mu_t(u) + r_t(u)$,
where
	$r_t(u) =  \sqrt{8 \log(t)/ n_t(u)}$
is a term that we'll call the \emph{confidence radius} whose meaning is that
    $|p(u) - \mu_t(u)| \leq r_t(u)$
with high probability. For our purposes here it is instructive to re-write the index as
	$I_t(u) = \mu_t(u) + \alpha\, r_t(u)$
for some parameter $\alpha$. Also, to better bound the early failure probability we will re-define
the confidence radius as
	$r_t(u) =  \sqrt{8\log(t_0+t)/ n_t(u)}$
for some parameter $t_0$. We will denote this parameterized version by $\ucb(\alpha,t_0)$. 

The original regret analysis of $\ucb$ in~\cite{bandits-ucb1} carries over \hlc{1}{to $\ucb(\alpha,t_0)$ so as to guarantee event-free regret}~\refeq{eq:regret-UCB1}; we omit the details.

Our contribution concerns estimating the $\Delta(u)$'s. We estimate the maximal expected reward $p^*$ via the sample average of an arm that has been played most often. More precisely, in order to bound the failure probability we consider an arm that has been played most often \emph{in the last $t/2$ rounds}. For a given round $t$ let $v_t$ be one such arm (ties broken arbitrarily), and let
	$\Delta_t(u) = \mu_t(v_t) - \mu_t(u)$
will be our estimate of $\Delta(u)$. This estimate (and the provable guarantee thereon) is the main technical contribution of this section.

We obtain an $(L,\eps)$-testable algorithm from $\ucb(6,T)$, where $T$ is the time horizon, by defining the $t$-th round guess as
\begin{align}\label{eq:guess}
(G^+, G^-) =(	
		\{v: \Delta_t(v) \leq \eps/4\},\;
		\{v: \Delta_t(v) > \eps/2\}).
\end{align}
The pseudocode is in Algorithm~\ref{alg:UCBhack}.

\begin{algorithm}[t]
\caption{The $(L,\eps)$-testable bandit algorithm with low regret.}
\label{alg:UCBhack}
\begin{algorithmic}[1]
\STATE {\bf Given:} Time horizon $T$, parameter $\eps\in(0,1)$.
\FOR{all arms $u$}
\STATE $n(u)\leftarrow 0$, $x(u) \leftarrow 0$, $\mu(u)\leftarrow 0$
	~~~\COMMENT{\#samples, total reward, sample average}
\ENDFOR
\FOR{rounds $t = 1,2,\, \ldots\, , T$~}
\STATE Pick arm $u$ with the maximal index
	$I(u) = \mu(u) + 12\sqrt{\tfrac{2\,\log(t+T)}{1+n(u)}}$.
\STATE Observe payoff $x$, update
	$n(u) \leftarrow n(u)+1$,
	~~$x(u) \leftarrow x(u)+x$,
	~~$\mu(u) \leftarrow x(u)/n(u)$.
\STATE \COMMENT{ Form the $t$-th round guess }
\STATE $v^* \leftarrow$ arm played most often in the last $t/2$ rounds.
\FOR{all arms $v$}
\STATE $\widehat{\Delta}(v) \leftarrow \mu(v^*) - \mu(v) $
	~~~~~~~\COMMENT{the $t$-th round estimate of $\Delta(v)$}
\ENDFOR
\STATE Output
	$(G^+, G^-) =\left(	
		\{v: \widehat{\Delta}(v) \leq \eps/4\},\;
		\{v: \widehat{\Delta}(v) > \eps/2\}
	\right)$.
\ENDFOR
\end{algorithmic}
\end{algorithm}

Let us pass to the provable guarantees. We express the ``quality" of our estimate $\Delta_t$ as follows:

\OMIT{For notational convenience, $\WHP{t}$ will mean \emph{with probability at least $1-t^{-3}$}.}

\begin{thm}\label{thm:estimate-DeltaT}
Consider the stochastic $n$-armed bandits problem. Suppose algorithm $\ucb(6,t_0)$ has been played for $t$ steps, and $t+t_0\geq 32$. Then with probability at least $1-(t_0+t)^{-2}$ for any arm $u$ we have
\be
|\Delta(u) - \Delta_t(u)| < \tfrac14 \Delta(u) + \delta(t) \label{eq:DeltaT-thm}
\ee
where $\delta(t) = O(\sqrt{\tfrac{n}{t} \log (t+t_0) })$.
\end{thm}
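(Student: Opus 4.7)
My plan is to set up a high-probability concentration event and then argue deterministically, combining Hoeffding's inequality with the UCB selection rule to control both $\mu_t(v_t)$ and $\mu_t(u)$ for each arm $u$. First, I would define the event $\CE$ that $|\mu_s(u) - p(u)| \le r_s(u)$ for every arm $u$ and every round $s \le t$. Since the tail probability under Hoeffding at radius $r_s(u) = \sqrt{8 \log(s+t_0)/n_s(u)}$ is at most $2(s+t_0)^{-16}$, a union bound over the at most $nt$ (round, arm) pairs gives $\Pr(\CE) \ge 1 - (t+t_0)^{-2}$ for $t + t_0 \ge 32$ and $n \le t$, after which I condition on $\CE$.

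Next, I would establish the standard UCB selection property under $\CE$: whenever an arm $u$ is chosen at some round $s$, the chain $I_s(u) \ge I_s(u^*) \ge p^*$ (using concentration of $u^*$) combined with $\mu_s(u) \le p(u) + r_s(u)$ yields both $\Delta(u) \le (\alpha+1)\, r_s(u)$ and the one-sided lower bound $\mu_s(u) \ge p^* - \alpha\, r_s(u)$. Applying this to $v_t$: pigeonhole over the last $t/2$ rounds gives $n_t(v_t) \ge \lceil t/(2n) \rceil$, so $r_t(v_t) = O(\sqrt{n\log(t+t_0)/t}) = O(\delta(t))$, and evaluating the UCB bound at $v_t$'s last play (whose radius is at most $r_t(v_t)$) gives $\Delta(v_t) = O(\delta(t))$. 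Combining with the concentration of $\mu_t(v_t)$ then yields $|\mu_t(v_t) - p^*| = O(\delta(t))$.

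For each arm $u$, I would then let $s$ be its last play (so $\mu_t(u) = \mu_s(u)$). Under $\CE$, $|\mu_t(u) - p(u)| \le r_s(u)$, and the UCB one-sided bound gives $\mu_t(u) - p(u) \ge \Delta(u) - \alpha r_s(u)$; together these confine $\mu_t(u) - p(u)$ to the interval $[\max(-r_s(u),\,\Delta(u) - \alpha r_s(u)),\,r_s(u)]$. A case analysis on $r_s(u)$ relative to $\Delta(u)$---with $\alpha = 6$ providing enough slack---gives the target bound: in the regime $r_s(u) \le \Delta(u)/(\alpha-1) = \Delta(u)/5$, both endpoints of the interval are bounded in absolute value by $\Delta(u)/5 \le \Delta(u)/4$ by direct arithmetic; in the complementary regime, one must bound $r_s(u)$ itself by $O(\delta(t))$, using that $u$ was not chosen at any round $s' \in (s, t]$ and comparing its index there against that of the most-played arm (which has $O(\delta(t))$-small radius).

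The theorem then follows from the decomposition $\Delta_t(u) - \Delta(u) = (\mu_t(v_t) - p^*) - (\mu_t(u) - p(u))$ and the triangle inequality. The main obstacle will be the second regime of the case analysis above: when an arm's confidence radius dwarfs its suboptimality gap, showing $r_s(u) = O(\delta(t))$ requires a cross-round index comparison that ties $u$'s radius to $v_t$'s, and the choice $\alpha = 6$ (rather than a smaller constant) is precisely what provides the arithmetic slack to turn the $1/(\alpha-1)$ and $1/(\alpha+1)$ ratios into the clean $1/4$ factor of the theorem.
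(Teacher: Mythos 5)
Your proposal is correct and follows essentially the same route as the paper's proof: condition on the Hoeffding concentration event, use the pigeonhole bound $n_t(v_t)\ge t/(2n)$ to make the most-played arm's confidence radius $O(\delta(t))$ (hence $\Delta(v_t)=O(\delta(t))$ via the index comparison against $u^*$), and exploit the comparison $I_{s'}(u)\le I_{s'}(v_t)$ at the last play $s'$ of $v_t$ to control every arm $u$ simultaneously, finishing with the triangle-inequality decomposition of $\Delta_t(u)-\Delta(u)$. One caveat on your second regime: the cross-round comparison does not (and cannot, e.g.\ for a constant-gap arm) yield $r_s(u)=O(\delta(t))$ at $u$'s own last play $s$; what it yields is $5\,r_{s'}(u)\le \Delta(u)+O(\delta(t))$ at the comparison round $s'$, which transfers to round $t$ with a factor $\bigl(\log(t_0+t)/\log(t_0+s')\bigr)^{1/2}\le \tfrac54$ since $s'\ge t/2$ and $t_0+t\ge 32$, giving $r_t(u)\le\tfrac14\Delta(u)+O(\delta(t))$ uniformly --- so the paper needs no case split at all, and you should run the argument through $r_{s'}(u)$ and $r_t(u)$ rather than $r_s(u)$.
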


\begin{note}{Remark.}
Either we know that $\Delta(u)$ is small, or we can approximate it up to a constant factor. Specifically, if
	$\delta(t) < \tfrac12\, \Delta_t(u)$
then
	$\Delta(u) \leq 2\,\Delta_t(u) \leq 5\, \Delta(u)$
else
	$\Delta(u) \leq 4 \delta(t)$.
\end{note}

\begin{proof}
Fix round $t$, let $v^* = v_t$ and let $s$ be the last round this arm has been played before round $t$. Recall that $s\geq t/2$ by definition of $v_t$.  Since by pigeonhole principle $n_t(v^*) \geq \tfrac{t}{2 n}$, it follows that $r_t(v) \leq O(\delta)$ where
	$\delta = \sqrt{\tfrac{n}{t} \log (t+ t_0)}$.
It is easy to see that
\begin{align*}
 r_s(v^*) \leq 2\, r_{s+1}(v^*)
	\leq 2\, r_t(v^*) = O(\delta).
\end{align*}

Then with probability at least $1-(t_0+t)^{-2}$ for any arm $u$ we have
\begin{align}\label{eq:thm-estimate-DeltaT-proof}
	p(v^*) + O(\delta)
	\geq  p(v^*) + 7 r_s(v^*)
	\geq I_s(v^*) \geq I_s(u) \geq p(u) + 5 r_s(u).
\end{align}
If $u^*$ is the arm with maximal expected reward, then plugging $u = u^*$ into
~\refeq{eq:thm-estimate-DeltaT-proof} gives
	$\Delta(v^*) \leq O(\delta)$.

We claim that~\refeq{eq:thm-estimate-DeltaT-proof} implies
	$r_t(u) \leq \tfrac14\,\Delta(u) + O(\delta)$.
Indeed, we can re-write~\refeq{eq:thm-estimate-DeltaT-proof} as
$$ 5 r_s(u) \leq p(v^*) - p(u) + O(\delta) \leq \Delta(u) + O(\delta).
$$
The claim follows since
	$r_t(u) \leq r_s(u)\, \log(t_0+t)/ \log(t_0+s) \leq \tfrac54\, r_s(u)$.

Now we are ready for the final calculation. Let $p^*$ be the maximal expected reward. Then
\begin{align*}
|\Delta(u) - \Delta_t(u)|
	&= | p^* - p(u) - \mu_t(v^*) + \mu_t(u) | \\
	&= | (p^* - p(v^*)) + (p(v^*) - \mu_t(v^*)) + (\mu_t(u) - p(u)) | \\
	&\leq \Delta(v^*) + |p(v^*) - \mu_t(v^*)| + |\mu_t(u) - p(u)| \\
	&\leq \Delta(v^*) + r_t(v^*) + r_t(u^*)
	\leq  \tfrac14\, \Delta(u) + O(\delta). \qedhere
\end{align*}
\end{proof}

Finally, let us prove that  Algorithm~\ref{alg:UCBhack}
is $(L,\eps)$-testable  as long as
	$L\geq \Omega(\tfrac{n}{\eps^2} \log T)$.

\begin{thm}
Consider algorithm $\ucb(6,T)$ where $T$ is the time horizon and the $t$-th round guess is given by~\refeq{eq:guess}. Assume that $\delta(L)\leq \eps/4$, where  $\delta(t)$ is from~\refeq{eq:DeltaT-thm}. Then the algorithm is $(L,\eps)$-testable.
\end{thm}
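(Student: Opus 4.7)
The plan is to derive this theorem as an essentially direct corollary of Theorem~\ref{thm:estimate-DeltaT}. I would invoke that theorem at round $t = L$ with $t_0 = T$, which gives a failure probability of at most $(T+L)^{-2} \leq T^{-2}$, exactly matching what the definition of $(L,\eps)$-testability requires. From here on I condition on the success event that the estimate $\Delta_L(u)$ satisfies $|\Delta(u) - \Delta_L(u)| < \tfrac14\,\Delta(u) + \delta(L)$ simultaneously for every arm $u$, and use the hypothesis $\delta(L) \leq \eps/4$ to absorb $\delta(L)$ into $\eps/4$ everywhere it appears.

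The remainder is a two-case check against the definition of the guess in~\refeq{eq:guess}. If $u$ is optimal, then $\Delta(u)=0$, so $|\Delta_L(u)| < \delta(L) \leq \eps/4$; hence $\Delta_L(u) \leq \eps/4$, placing $u\in G^+$, and $\Delta_L(u) < \eps/2$, keeping $u\notin G^-$. If $u$ is $\eps$-suboptimal, i.e.\ $\Delta(u)\geq \eps$, then
\[
\Delta_L(u) \;>\; \Delta(u) - \tfrac14\Delta(u) - \delta(L) \;\geq\; \tfrac34\,\eps - \tfrac14\,\eps \;=\; \tfrac12\,\eps,
\]
so $\Delta_L(u) > \eps/2$ puts $u\in G^-$, and $\Delta_L(u) > \eps/4$ keeps $u\notin G^+$. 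Both required properties hold for every arm on the success event, which is what $(L,\eps)$-testability demands.

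There is no genuine obstacle here; the only thing to watch is the bookkeeping that the parameters of Algorithm~\ref{alg:UCBhack} line up with the hypotheses of Theorem~\ref{thm:estimate-DeltaT}: the index uses $\alpha=6$ and $t_0=T$, so Theorem~\ref{thm:estimate-DeltaT} applies verbatim, and the mild condition $t+t_0 = L+T \geq 32$ is harmless (it can be arranged by assuming $T\geq 32$, else the claimed $T^{-2}$ failure bound is vacuous). The real content of the statement therefore sits inside Theorem~\ref{thm:estimate-DeltaT}; this last theorem is just the translation of that $\Delta$-estimation guarantee into the concrete thresholds $\eps/4$ and $\eps/2$ used to define $G^+$ and $G^-$.
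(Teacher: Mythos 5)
Your proposal is correct and follows essentially the same route as the paper: both derive the result as a direct corollary of Theorem~\ref{thm:estimate-DeltaT} applied at round $L$ with $t_0=T$, followed by the two-case check of optimal versus $\eps$-suboptimal arms against the thresholds $\eps/4$ and $\eps/2$ in~\refeq{eq:guess}. Your write-up is in fact slightly more explicit than the paper's (which omits the ``not in $G^-$'' / ``not in $G^+$'' halves of the check and the probability bookkeeping), but there is no substantive difference.
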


\begin{proof}
If $u$ is an optimal arm, then $\Delta(u)=0$, so by~\refeq{eq:DeltaT-thm} we have
	$\Delta_t(u) \leq \delta(t) \leq \eps/4$.
If $\Delta(u)\geq \eps$ then by~\refeq{eq:DeltaT-thm} we have
	$\Delta_t(u) \geq \Delta(u)/2 \geq \eps/2$.
\end{proof}

\section{Upper and Lower Bounds}
\label{lowerb}

Plugging the classifier from Section~\ref{sec:safe} and the bandit algorithm from Section~\ref{sec:UCB} into the meta-algorithm from Section~\ref{sec:BWC}, we obtain the following numerical guarantee.

\begin{thm}\label{thm:algorithm}
Consider an instance $\mathcal{S}$ of the eventful bandit problem with number of rounds $T$, $n$ arms, $k$ events, minimum shift $\eps_S$, minimum suboptimality $\Delta$, and concept class diameter $d_\CF$. Assume that any two events are at least $2L$ rounds apart, where $L = \Theta(\tfrac{n}{\eps_S^2} \log T)$. Consider the $\ucbc$ algorithm with parameter $L$ and components {\tt classifier} and {\tt bandit} as presented, respectively, in Section~\ref{sec:safe} and Section~\ref{sec:UCB}. Then the regret of~ $\ucbc$ is
\begin{align*}
	R_{\tt BWC}(T) \leq  \left( (3k+2d_\CF)\tfrac{n}{\Delta} + k \tfrac{n}{\eps_S^2} \right)(\log T).
\end{align*}
\end{thm}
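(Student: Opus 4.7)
The plan is to derive Theorem~\ref{thm:algorithm} as a direct corollary of Theorem~\ref{thm:meta-algorithm}, by plugging in the two ``off-the-shelf'' components constructed in Sections~\ref{sec:safe} and~\ref{sec:UCB} and performing a short arithmetic simplification. There is essentially no new probabilistic or combinatorial argument required --- the conceptual heavy lifting has already been done in the meta-algorithm analysis and in the two component analyses.

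First, I would verify that the hypotheses of Theorem~\ref{thm:meta-algorithm} are met. The classifier \safecl{} from Section~\ref{sec:safe} is safe and, by the claim in that section, has \AlexDiam{} bounded by $d_\CF$, so we may take $d_{\FP} \leq d_\CF$. For the bandit subroutine, Section~\ref{sec:UCB} shows that $\ucb(6,T)$ with the guess~\refeq{eq:guess} is $(L,\eps_S)$-testable whenever $\delta(L) \leq \eps_S/4$, which holds for our choice $L = \Theta(\tfrac{n}{\eps_S^2} \log T)$; the same algorithm inherits the event-free regret bound~\refeq{eq:regret-UCB1} of \UCB, namely $R_0(L) \leq O(\tfrac{n}{\Delta} \log L)$. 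This expression is concave in $L$ (being a logarithm), as required by Theorem~\ref{thm:meta-algorithm}. The hypothesis that consecutive events are at least $2L$ rounds apart is carried over as an assumption of the theorem we are proving.

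Next, I would substitute into~\refeq{eq:meta-algorithm}. The first summand becomes
\[
(2k + d_\CF)\, R_0\!\left(\tfrac{T}{2k+d_\CF}\right) \;\leq\; O\!\left((2k+d_\CF)\,\tfrac{n}{\Delta}\, \log T\right),
\]
using $\log\!\tfrac{T}{2k+d_\CF} \leq \log T$. The second summand is $(k + d_\CF)\, R_0(L) \leq O((k+d_\CF)\,\tfrac{n}{\Delta}\,\log L)$, and since $L \leq T$ this is at most $O((k+d_\CF)\,\tfrac{n}{\Delta}\,\log T)$. The third summand $kL$ equals $\Theta(k\,\tfrac{n}{\eps_S^2}\,\log T)$ by the choice of $L$. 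Adding the three contributions and collecting coefficients yields the combined bound $\left((3k + 2d_\CF)\tfrac{n}{\Delta} + k\tfrac{n}{\eps_S^2}\right)\log T$, as stated.

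The only mild subtlety --- and the closest thing to an obstacle --- is keeping the constants honest when one merges the $(2k+d_\CF)\log T$ term from the first summand with the $(k+d_\CF)\log T$ term from the second summand; this is what produces the coefficients $3k$ and $2d_\CF$ in the final expression (as opposed to, say, absorbing everything into a generic $O(\cdot)$). I would therefore carry out the arithmetic explicitly rather than hiding behind asymptotic notation, and note that the hidden constants inside $R_0$ and inside the definition of $L$ can be absorbed into the already-implicit constants of the theorem statement.
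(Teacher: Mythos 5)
Your proposal is correct and matches the paper's (implicit) derivation exactly: the paper obtains Theorem~\ref{thm:algorithm} by plugging \safecl{} (with $d_{\FP}\le d_\CF$) and the $(L,\eps_S)$-testable variant of $\ucb$ (with concave event-free regret $R_0(t)=O(\tfrac{n}{\Delta}\log t)$ and $L=\Theta(\tfrac{n}{\eps_S^2}\log T)$) into the bound~\refeq{eq:meta-algorithm}, and your bookkeeping $(2k+d_\CF)+(k+d_\CF)=3k+2d_\CF$ for the first two summands plus $kL$ for the third is precisely the intended arithmetic.
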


While the linear dependence on $n$ in this bound may seem large, note that without additional assumptions, regret must be linear in $n$, since each arm must be pulled at least once. In an actual search engine application, the arms can be restricted to, say, the top ten results that match the query.

We now state two lower bounds about eventful bandit problems. Theorem \ref{thm:LB-contexIgnoring} shows that in order to achieve regret that is logarithmic in the number of rounds, a context-aware algorithm is necessary, assuming there is at least one event. Incidentally, this lowerbound can be easily extended to prove that, in our model, \emph{no} algorithm can achieve logarithmic regret when an event oracle $f$ is not contained in the concept class $\CF$.


\begin{thm} \label{thm:LB-contexIgnoring}
Consider the eventful bandit problem with number of rounds $T$, two arms, minimum shift $\eps_S$ and minimum suboptimality $\Delta$, where $\eps_S = \Delta = \eps$, for an arbitrary $\eps \in (0,\tfrac12)$. For any context-ignoring bandit algorithm $\A$, there exists a problem instance with a single event such that regret $R_\A(T) \ge \Omega(\eps\sqrt{T})$.
\end{thm}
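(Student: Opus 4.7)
Plan: I would prove this via a two-instance (Le Cam / Yao) indistinguishability argument combined with a KL-divergence / Pinsker computation, the standard template for non-stationary bandit lower bounds.

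First, construct two instances $S_A$ and $S_B$, each with a single event and satisfying $\eps_S = \Delta = \eps$. Let $w = \lfloor\sqrt{T}\rfloor$. Both use Bernoulli rewards. In $S_A$, arm 1 has click probability $\tfrac12+\tfrac{\eps}{2}$ on $[1,T/2]$ and $\tfrac12-\tfrac{\eps}{2}$ on $(T/2,T]$, while arm 2 takes the swapped values; the event occurs at round $T/2+1$. In $S_B$, the event is delayed by $w$ rounds: the same swap occurs at round $T/2+w+1$. Each instance has exactly one event with shift $\eps$ and minimum suboptimality $\eps$. Crucially, $S_A$ and $S_B$ are identical on rounds outside the window $W = (T/2,\,T/2+w]$, while inside $W$ the unique optimal arm is arm 2 in $S_A$ but arm 1 in $S_B$.

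Second, by Yao's minimax principle it suffices to treat $\A$ as deterministic. Let $P_A$ and $P_B$ be the induced distributions over full $T$-round histories. Since the two instances differ only on the $w$ rounds in $W$, writing out the chain rule for KL over the joint (arm,reward) history gives
\begin{equation*}
  \mathrm{KL}(P_A\,\|\,P_B) \;=\; \sum_{t\in W} E_{P_A}\!\left[\mathrm{KL}\bigl(\mathrm{Bern}(p_t^A(i_t))\,\|\,\mathrm{Bern}(p_t^B(i_t))\bigr)\right] \;\le\; C\eps^2 w,
\end{equation*}
using the standard estimate $\mathrm{KL}(\mathrm{Bern}(\tfrac12+\tfrac{\eps}{2})\,\|\,\mathrm{Bern}(\tfrac12-\tfrac{\eps}{2})) = O(\eps^2)$. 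Pinsker's inequality then yields $\|P_A - P_B\|_{\mathrm{TV}} \le C'\eps\sqrt{w}$.

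Third, translate this indistinguishability into regret. Let $n^A$ (resp.\ $n^B$) denote the expected number of arm-1 pulls during $W$ under $P_A$ (resp.\ $P_B$). Because arm 1 is suboptimal on $W$ in $S_A$ while arm 2 is suboptimal on $W$ in $S_B$,
\begin{equation*}
  R_\A(S_A) + R_\A(S_B) \;\ge\; \eps\, n^A + \eps\,(w - n^B) \;=\; \eps w - \eps\,(n^B - n^A).
\end{equation*}
Since $|n^B - n^A| \le w \cdot \|P_A - P_B\|_{\mathrm{TV}} \le C'\eps\, w^{3/2}$, the right-hand side is at least $\eps w / 2$ whenever $\eps\sqrt{w} \le 1/(2C')$. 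With $w = \sqrt T$, this gives $\max(R_\A(S_A), R_\A(S_B)) \ge \Omega(\eps\sqrt T)$, and by Yao the worst-case instance for any randomized $\A$ inherits the same bound.

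The main technical hurdle is making the KL chain rule precise for a bandit: the per-round KL contribution depends on the random arm $i_t$ that the algorithm selects, so one must carefully decompose the log-likelihood ratio and take conditional expectations under $P_A$ (rather than sliding between $P_A$ and $P_B$). A secondary obstacle is ensuring the side condition $\eps\sqrt{w}\le 1/(2C')$ holds over the full range $\eps\in(0,\tfrac12)$: for $\eps$ too large to admit $w = \sqrt T$, I would shrink the window to $w = \Theta(\min(\sqrt T,\,1/\eps^2))$ and re-run the same calculation, which still yields a lower bound of order $\Omega(\eps\sqrt T)$ in the nontrivial regime where the bound exceeds constant-order achievable regret.
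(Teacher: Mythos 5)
Your high-level strategy (two nearby instances plus a KL/Pinsker indistinguishability bound) is a legitimate template for non-stationary bandit lower bounds, but as written it cannot prove the stated theorem, because the claim must hold for every $\eps\in(0,\tfrac12)$ --- including constant $\eps$ --- and your argument only closes for $\eps = O(T^{-1/4})$. The tension is structural: to extract regret $\Omega(\eps\sqrt{T})$ from a single ambiguous window $W$ you need $|W|=w\geq \sqrt{T}$, but your own side condition $\eps\sqrt{w}\le 1/(2C')$ (needed so that $\eps\,|n^B-n^A|\le \eps w/2$) forces $w = O(1/\eps^2)$. These are compatible only when $\eps \le O(T^{-1/4})$. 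Your proposed repair --- shrinking the window to $w=\Theta(\min(\sqrt{T},1/\eps^2))$ --- yields a bound of order $\eps w = O(\min(\eps\sqrt{T},\,1/\eps))$, which for, say, $\eps=1/4$ is $O(1)$, nowhere near $\Omega(\sqrt{T})$. So this is a genuine gap, not a boundary technicality; indeed the paper explicitly emphasizes that this lower bound holds even when $\Delta$ is a constant independent of $T$, which is exactly the regime your argument misses. (The KL decomposition and the $|n^B-n^A|\le w\cdot\|P_A-P_B\|_{\mathrm{TV}}$ step are themselves fine.)

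The missing idea is to use $\sqrt{T}$ candidate event locations rather than two. The paper partitions time into $N=\sqrt{T}$ phases of length $N$, keeps arm $y$ at $\tfrac12$ throughout, and defines $\mathcal{I}_i$ by flipping arm $z$ from $\tfrac12-\eps$ to $\tfrac12+\eps$ at the start of phase $i$. Letting $q_i$ be the probability under $\mathcal{I}_0$ that $z$ is pulled at least once in phase $i$: if $q_i\ge\tfrac12$ for all $i$, then $\mathcal{I}_0$ already costs $\eps N/2$; otherwise some phase has $q_i<\tfrac12$, and on the event that $z$ is never pulled in that phase the algorithm's behavior on $\mathcal{I}_0$ and $\mathcal{I}_i$ is \emph{exactly} identical (the instances differ only in arm $z$'s payoffs from phase $i$ onward), so on $\mathcal{I}_i$ it misses the newly optimal arm for the entire phase and pays $\eps N$ with probability at least $\tfrac12$. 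No KL computation is needed, and nothing degrades as $\eps$ grows. If you want to retain the information-theoretic flavor, you would need a multi-hypothesis version of your argument with one instance per phase, at which point it essentially reduces to the same bookkeeping as the paper's elementary proof.
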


\begin{proof}
For simplicity, assume that $N = \sqrt{T}$ is an integer. Define problem instances $\mathcal{I}_i$, $0\leq i\leq N$ as follows. In each of these instances, the $T$ rounds are partitioned into $N$ phases, each of length $N$. There are two arms, call them $y$ and $z$. Set $p_t(y) = \tfrac12$ for all $t$. For the problem instance $\mathcal{I}_0$, $p_t(z)=\tfrac12-\eps$ for all $t$. For problem instances $\mathcal{I}_i$, $i\geq 1$ set $p_t(z) = \tfrac12 - \eps$ in all phases $j<i$, and $p_t(z) = \tfrac12 + \eps$ in all phases $j\geq i$. (Thus, in each instance $\mathcal{I}_i$ there is a single event that occurs in the first round of phase $i$.)

Now, let $q_i$ be the probability that on problem instance $\mathcal{I}_0$, arm $z$ is chosen by algorithm $\A$ at least once during phase $i$. If $q_i \geq \tfrac12$ for each phase $i$, then on the problem instance $\mathcal{I}_0$ each phase $i$ contributes at least $\eps/2$ to regret, so the total regret is at least $\eps N/2$. Otherwise, $q_i<\tfrac12$ for some $i$. Since instances $\mathcal{I}_0$ and $\mathcal{I}_i$ coincide on the first $i-1$ phases, algorithm $\A$ behaves the same way on both instances up to the end of phase $i-1$. Moreover, $\A$ behaves the same way on both instances throughout phase $i$ assuming that it never plays arm $z$ during that phase. Therefore with probability $1-q_i$ its regret on instance $\mathcal{I}_i$ due to phase $i$ alone is $\eps$ per each round in this phase; so the total regret is at least $\eps N/2$.
\end{proof}

Theorem \ref{thm:LB-eventful} proves that in Theorem~\ref{thm:algorithm}, linear dependence on $k+d_\CF$ is essentially unavoidable. If we desire a regret bound that has logarithmic dependence on the number of rounds, then a linear dependence on $k+d_\CF$ is necessary.

\begin{thm} \label{thm:LB-eventful}
Consider the eventful bandit problem with number of rounds $T$ and concept class diameter $d_\CF$. Let $\A$ be an eventful bandit algorithm. 
\begin{enumerate}
	\item[(i)] There exists a problem instance with $n$ arms, $k$ events, minimum shift $\eps_S$, minimum suboptimality $\Delta$, where $\eps_S = \Delta = \eps$, for arbitrary $k\geq 1$, $n\geq 3$, and $\eps\in (0,\tfrac14)$, such that $R_\A(T) \ge \Omega(k\, \tfrac{n}{\eps})\, \log (T/k)$.

	\item[(ii)] There exists a problem instance with two arms, a single event, minimum shift $\Theta(1)$ and minimum suboptimality $\Theta(1)$ such that regret
    	$R_\A(T) \ge \Omega(T^{1/3})$ or $R_\A(T) \ge \Omega(d_\CF \log T)$.
\end{enumerate}
\end{thm}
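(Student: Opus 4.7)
For part (i), the plan is to reduce to $k$ independent copies of a classical stochastic bandit lower bound. I would partition $[1,T]$ into $k$ epochs of length $\lfloor T/k\rfloor$ with events exactly at the epoch boundaries, and in each epoch designate one of the $n$ arms as optimal with click probability $\tfrac12+\eps$ and all others $\tfrac12-\eps$. To make events real (i.e., to satisfy the \MinShift{} condition), the optimal arm in the $j$-th epoch is chosen uniformly at random from the $n-1$ arms other than the previous epoch's optimum. The context is uninformative (a single symbol), and the concept class is the constant $f\equiv +1$ evaluated at the epoch boundaries (extended by $\mynull$ elsewhere), which has trivial diameter and thus does not inflate $d_\CF$ beyond the assumptions. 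Within each epoch the algorithm faces a fresh $n$-armed Bernoulli bandit problem of horizon $T/k$ with gap $\eps$; the Lai--Robbins/Auer--Cesa-Bianchi--Fischer bound then forces $\Omega((n/\eps)\log(T/k))$ expected regret per epoch, and summing gives part (i).

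For part (ii), I would use the singleton concept class $\CF=\{f_1,\ldots,f_{d}\}$ on contexts $\{y_1,\ldots,y_d\}\cup\{y_0\}$ where $d=d_\CF$, with $f_i(y_i)=+1$, $f_i(y_j)=-1$ for $j\ne i$, and $f_i(y_0)=-1$; this class has diameter exactly $d$. The idea is to exhibit $d$ candidate ``hard'' instances $\mathcal{I}_1,\ldots,\mathcal{I}_d$ sharing a fixed two-arm bandit (arm $1$ has bias $\tfrac12+\eps$, arm $2$ has $\tfrac12-\eps$ until an event, whereupon they swap) and a common prefix of ``null'' contexts $y_0$, but where in $\mathcal{I}_i$ the event is triggered by the arrival of $y_i$ at a designated time $\tau_i$, with the true oracle being $f_i$. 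Because the $\mathcal{I}_i$ are information-theoretically indistinguishable up to the moment $y_i$ is first displayed, any algorithm $\A$ must commit, upon first seeing each $y_j$, to a restart probability $\rho_j$ that depends only on the (common) history so far.

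The core of the argument is a case analysis on the profile $(\rho_1,\ldots,\rho_d)$. In the ``aggressive'' case, where $\sum_j\rho_j$ is large, each restart in an event-free prefix forces $\A$ to re-explore the two arms, and a standard stochastic-bandit lower bound (of the Lai--Robbins flavor, applied to each fresh exploration phase) shows that each restart contributes $\Omega(\log T)$ expected regret; summing yields $\Omega(d_\CF\log T)$ on the worst instance. In the ``conservative'' case, where $\sum_j\rho_j$ is small, an averaging argument over $i\in\{1,\ldots,d\}$ produces some $i^*$ with $\rho_{i^*}$ near $0$: on $\mathcal{I}_{i^*}$, after $\tau_{i^*}$, algorithm $\A$ is effectively a context-ignoring non-stationary bandit and must rely only on clicks to detect the reward swap. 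To get $\Omega(T^{1/3})$, I would adapt the fooling-instance construction of Theorem~\ref{thm:LB-contexIgnoring} but optimize the adversary's event time against the algorithm's detection-delay tradeoff: postponing restart by $W$ rounds accrues $\Omega(\eps W)$ regret if the adversary can locate the event inside a $W$-window, and choosing the free parameters to balance $\eps W$ against the $\Omega(\eps^{-2})$ samples required for detection yields a minimax rate of $T^{1/3}$ rather than $T^{1/2}$.

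The main obstacle I expect is making the case split genuinely adversarial: the $\rho_j$ can depend on the entire realized history, so ``averaging over $i$'' must be executed via a Le Cam / KL-divergence comparison between $\mathcal{I}_i$ and a reference instance in which the event has not yet occurred, ensuring that the joint distribution of $(\rho_1,\ldots,\rho_d)$ is essentially shared across all $\mathcal{I}_i$'s prior to the respective $\tau_i$'s. A secondary subtlety is quantifying the ``cost per restart'' as $\Omega(\log T)$: because restarts in $\ucbc$-style algorithms need not discard prior samples if the algorithm is clever, one must formalize restart in a model-independent way, e.g., by arguing that any algorithm with low regret must at some point commit to a single optimal arm, and that changing this commitment in response to $y_j$ necessarily pays a classical $\Omega(\log T)$ re-identification cost.
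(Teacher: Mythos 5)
Your part (i) is essentially the paper's proof: $k$ epochs of length $T/k$, one arm at $\tfrac12+\eps$ per epoch with consecutive optima forced to differ, events announced, and the classical per-epoch $\Omega((n/\eps)\log(T/k))$ bound summed over epochs. No issues there.

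Part (ii) has the right skeleton --- a dichotomy between ``the algorithm over-explores on a null instance'' and ``the algorithm fails to detect the event on some fooling instance $\mathcal{I}_i$'' --- but the execution plan has a genuine gap. You hang the case analysis on ``restart probabilities'' $\rho_j$ and assert that each restart in an event-free prefix costs $\Omega(\log T)$. For an arbitrary algorithm $\A$ there is no well-defined notion of restart, and the cost claim is false as stated: an algorithm can ``restart'' upon seeing $y_j$ and simply decline to re-explore (paying nothing on the null instance), or never restart yet keep sampling the bad arm. You flag this as a secondary subtlety, but it is the crux, and the fix is not to formalize ``restart'' at all. The paper's proof instead counts, for each phase $i$, the number of pulls of the suboptimal arm $z$: either $\A$ pulls $z$ at least $\log T$ times in every phase of the null instance $\mathcal{I}_0$ (giving $\Omega(d_\CF\log T)$ there, since the gap is constant), or in some phase it pulls $z$ fewer than $\log T$ times with probability $\ge\tfrac12$, in which case an explicit likelihood-ratio computation shows the event on $\mathcal{I}_i$ goes undetected with probability at least $\tfrac12 T^{-1/3}$, costing $\Omega(T^{-1/3}\cdot T^{2/3})=\Omega(T^{1/3})$. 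The gap is tuned (payoffs $\tfrac{1}{1+\ce}$ vs.\ $\tfrac{\ce}{1+\ce}$) precisely so that $\log T$ pulls yield a likelihood ratio of at most $(\ce)^{\log T}=T^{1/3}$, and the construction uses $N=\min(d_\CF,T^{1/3})$ phases of length $\ge T^{2/3}$, with the diameter-achieving sequence of $\CF$ planted at phase boundaries so that $\mathcal{I}_0$ and $\mathcal{I}_i$ share contexts through phase $i$.

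Two further problems with your plan for the conservative case. First, you propose to get $T^{1/3}$ by balancing a detection window $W$ against the $\eps^{-2}$ samples needed for detection; but part (ii) requires minimum shift and suboptimality $\Theta(1)$, so $\eps$ is not a free parameter and that tradeoff is unavailable --- the $T^{1/3}$ comes from the $\log T$-pull budget and the change of measure, not from an $\eps$ scaling. Second, reducing to ``$\A$ is effectively context-ignoring'' and invoking Theorem~\ref{thm:LB-contexIgnoring} does not go through, since $\A$ may use the contexts arbitrarily; the Le Cam/KL comparison you gesture at is indeed the right tool, but it must be applied per-phase to the pull counts of arm $z$, which is exactly Claim~\ref{cl:event} in the appendix. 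A minor additional point: the theorem fixes the concept class diameter, and the paper's argument works for an arbitrary $\CF$ of diameter $d_\CF$ via its diameter-achieving sequence, whereas you construct a bespoke concept class; this weakens the statement slightly if $\CF$ is regarded as given rather than as part of the constructed instance.
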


\begin{proof}
For part (i), construct the family of problem instances as follows. In each instance, there are $k$ phases of length $T/k$ each. For each phase $i$, one arm, call it $y_i$, has payoff $p_t(y_i) = \tfrac12+\eps$, and all other arms $y$ have payoff $p_t(y)=\tfrac12-\eps$. We have one problem instance for each sequence $\{y_i\}$ such that $y_i \neq y_{i+1}$ for each $i$. Note that there is an event in the first round of each phase; without loss of generality let us assume that this is known to the algorithm. Then in each phase $i\geq 1$ the algorithm (essentially) needs to solve a fresh instance of the stochastic bandit problem on $n-1$ arms with time horizon $T/k$ and payoffs $\tfrac12\pm \eps$, which implies regret $\Omega(\tfrac{n}{\eps}) \log (T/k)$~\cite{Lai-Robbins-85,bandits-ucb1}. We omit the easy formal details.

For part (ii), partition the $T$ rounds into $N = \min(d_\CF, T^{1/3})$ phases, each of length at least $T^{2/3}$. We define problem instances $\mathcal{I}_i, 0 \le i \le N$, in a similar way as in Theorem~\ref{thm:LB-contexIgnoring}. There are two arms, $y$ and $z$. Set $p_t(y) = \frac{1}{2}$ for all $t$. For problem instance $\mathcal{I}_0$, set $p_t(z) = \frac{1}{1 + e^{1/3}}$. In problem instance $\mathcal{I}_i$, for $i \ge 1$, set $p_t(z) = \frac{1}{1 + e^{1/3}}$ in all phases $j<i$, and $p_t(z) = \frac{e^{1/3}}{1 + e^{1/3}}$ in all phases $j\geq i$. Note that $\frac{1}{1 + e^{1/3}} < \frac{1}{2} < \frac{e^{1/3}}{1+e^{1/3}}$.

In Appendix~\ref{app:LB}, we show how to define the context sequence $\{x_t\}$ in a way consistent with all our assumptions, in such a way that the contexts for problem instances $\mathcal{I}_0$ and $\mathcal{I}_i$ agree in the first $i$ phases. The idea is that for both problem instances, the first round of each phase $j<i$ triggers a false positive; this is possible since (essentially) we are allowed $d_\CF$ false positives.

The rest of the proof involves calculations similar to those in the proof of Theorem~\ref{thm:LB-contexIgnoring}. First, suppose $d_\CF \ge T^{1/3}$. Define $q_i$ as in the proof of Theorem~\ref{thm:LB-contexIgnoring}. If $q_i \geq \tfrac12$ for each phase $j$, then for the problem instance $\mathcal{I}_0$ we have $R_\A(T) \ge \Omega(T^{1/3})$. Otherwise, let $i$ be such that $q_i < \tfrac12$. By our construction, with probability $1-q_i$ algorithm $\A$ behaves identically on instances $\mathcal{I}_0$ and $\mathcal{I}_i$ through the first $i$ phases. Thus, on instance $\mathcal{I}_i$ in phase $i$ alone it incurs regret $\Omega(1)$ per each round of the phase, for a total of  $R_\A(T) \geq \Omega(T^{2/3})$.

Next, suppose $d_\CF < T^{1/3}$. Let $q_{i,j}$ be the probability that for problem instance $\mathcal{I}_j$, arm $z$ is chosen by $\A$ at least $\log T$ times during phase $i$. If $q_{i,0} \ge \frac{1}{2}$ for each phase $i$, then $R_\A(T) \ge \Omega(d_\CF \log T)$ on problem instance $\mathcal{I}_0$. Otherwise, let $i$ be such that $q_{i,0} < \frac{1}{2}$. In Appendix~\ref{app:LB}, we give a calculation that shows that
	$(1-q_{i,i}) \geq T^{-1/3}(1-q_{i,0})  $,
which implies that $R_\A(T) \ge \Omega(\frac{1}{T^{1/3}}) (T^{2/3} - \log T) \ge \Omega(T^{1/3})$
on problem instance $\mathcal{I}_i$.
\end{proof}

\section{Experiments}
To truly demonstrate the benefits of $\ucbc$ requires real-time
manipulation of search results.  Since we did not have the means to
deploy a system that monitors click/skip activity and correspondingly
alters search results with live users, we describe a
collection of experiments on synthetically generated data.

We begin with a head-to-head comparison of $\ucbc$ versus a baseline $\ucb$
algorithm and show that $\ucbc$'s performance improves substantially
upon $\ucb$.  Next, we compare the performance of these algorithms as we
vary the fraction of intent-shifting queries: as the fraction increases,
$\ucbc$'s performance improves even further upon prior approaches.
Finally, we compare the performance as we vary the number of features.
While our theoretical results suggest that regret grows with the
number of features in the context space, in our experiments,
we surprisingly find that $\ucbc$ is robust to higher dimensional feature
spaces.

{\noindent \bf Setup:}
We synthetically generate data as follows.  We assume that there are
100 queries where the total number of times these queries are posed is
3M.  Each query has five search results for a user to select
from.  If a
query does not experience any events --- i.e., it is not ``intent-shifting'' --- then the optimal search result is fixed over time; otherwise the optimal search result may change.
Only
10\% of the queries are intent-shifting, with at most 10 events per such query.
Due to the random nature with which data
is generated, regret is reported as an average over 10 runs.  The event oracle is
an axis-parallel rectangle
anchored at the origin, where points inside the box are negative and
points outside the box are positive.  Thus, if there are two features, say
query volume and query abandonment rate, an event occurs if and only if
both the volume and abandonment rate exceed certain thresholds.

{\noindent \bf Bandit with Classifier ($\ucbc$):}
Figure~\ref{fig:BWC}(a) shows the average cumulative regret over time of three
algorithms.  Our baseline comparison is $\ucb$ which assumes that the
best search result is fixed throughout.  In addition, we compare to an algorithm we call
$\ucbo$, which uses the event oracle to reset $\ucb$ whenever an event occurs.  We also compared
to $\expthree$, but its performance was dramatically worse and thus we have
not included it in the figure.

\begin{figure}[t]
\begin{minipage}{0.48\linewidth}
\centering
\includegraphics[width=\linewidth]{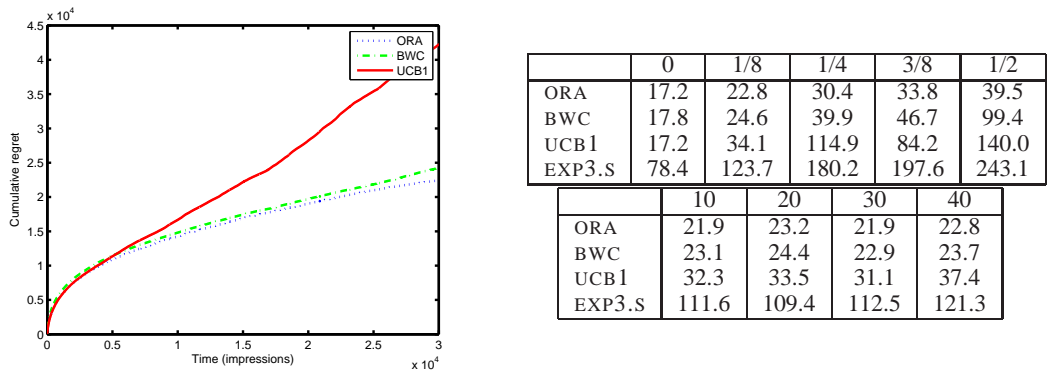}
\end{minipage}
\hfill
\begin{minipage}{0.48\linewidth}
\centering\small
\begin{tabular}{|l|c|c|c|c|c|} \hline
& 0  & 1/8  & 1/4 & 3/8 & 1/2 \\ \hline
$\ucbo$ & 17.2 & 22.8 & 30.4 & 33.8 & 39.5  \\
$\ucbc$& 17.8 & 24.6 & 39.9 & 46.7 & 99.4  \\
$\ucb$ & 17.2 & 34.1 & 114.9 & 84.2 & 140.0  \\
$\expthree$& 78.4 & 123.7 & 180.2 & 197.6 & 243.1  \\ \hline
\end{tabular}
\vfill
\begin{tabular}{|l|c|c|c|c|c|c|c|} \hline
&10&20&30&40 \\ \hline
$\ucbo$ & 21.9&23.2&21.9&22.8 \\
$\ucbc$ & 23.1&24.4&22.9&23.7 \\
$\ucb$ & 32.3&33.5&31.1&37.4 \\
$\expthree$ & 111.6&109.4&112.5&121.3 \\ \hline
\end{tabular}
\end{minipage}
\caption{
(a) (Left) $\ucbc$'s cumulative regret compared to $\ucb$ and $\ucbo$ ($\ucb$ with an oracle
indicating the exact locations of the intent-shifting event)
(b) (Right, Top Table) Final regret (in thousands) as the
fraction of intent-shifting queries varies.
With more intent-shifting queries, $\ucbc$'s advantage over prior approaches
improves.
(c) (Right, Bottom Table) Final regret (in thousands) as the number of features grows.
}
\label{fig:BWC}
\end{figure}

In the early stages of the experiment before any intent-shifting event has
happened, $\ucb$ performs the best.  $\ucbc$'s safe classifier makes many mistakes in the beginning and consequently pays the
price of believing that each query is experiencing an event when
in fact it is not.  As time progresses, $\ucbc$'s classifier makes fewer mistakes, and consequently knows when to reset $\ucb$ more accurately.  $\ucb$ alone ignores the context entirely and thus incurs substantially larger cumulative regret by the end.

{\noindent \bf Fraction of Intent-Shifting Queries:}
In the next experiment, we varied the fraction of intent-shifting queries.
Figure~\ref{fig:BWC}(b) shows the result of changing the
distribution from 0, 1/8, 1/4, 3/8 and 1/2
intent-shifting queries.  If there are no intent-shifting queries, then $\ucb$'s regret is the
best.  We expect this outcome since $\ucbc$'s classifier, because it is safe, initially assumes that
all queries are intent-shifting and thus needs time to learn that in fact no
queries are intent-shifting.  On the other hand, $\ucbc$'s regret dominates the
other approaches, especially as the fraction of intent-shifting queries
grows.  $\expthree$'s performance is quite poor in this experiment --- even
when all queries are intent-shifting.  The reason is that even when a query
is intent-shifting, there are at most 10 intent-shifting events, i.e., each query's
intent is not shifting all the time.

With more intent-shifting queries, the expectation is that regret
monotonically increases.  In general, this seems to be true in our
experiment.  There is however a decrease in regret going from 1/4 to
3/8 intent-shifting queries.  We believe that this is due to the fact that each
query has at most 10 intent-shifting events spread uniformly and it is
possible that there were fewer events with potentially smaller shifts
in intent in those runs.  In other words, the standard deviation of
the regret is large.  Over the ten 3/8 intent-shifting runs for
$\ucbo$, $\ucbc$, $\ucb$ and $\expthree$, the standard deviation was
roughly 1K, 10K, 12K and 6K respectively.

{\noindent \bf Number of Features:}
Finally, we comment on the performance of our approach as the number
of features grows.  Our theoretical results suggest that $\ucbc$'s performance
should deteriorate as the number of features grows.  Surprisingly,
$\ucbc$'s performance is consistently close to the Oracle's.
In Figure~\ref{fig:BWC}(b), we show the cumulative regret
after 3M impressions as the dimensionality of the context vector grows
from 10 to 40 features.  $\ucbc$'s regret
is consistently close to $\ucbo$ as the number of features grows.  On the
other hand, $\ucb$'s regret though competitive is worse than $\ucbc$, while $\expthree$'s performance is across
the board poor.  Note that both $\ucb$ and $\expthree$'s regret is completely
independent of the number of features.
The standard deviation of the regret over the 10 runs is substantially
lower than the previous experiment.  For example, over 10 features,
the standard deviation was 355, 1K, 5K, 4K for $\ucbo$, $\ucbc$, $\ucb$
and $\expthree$, respectively.

\section{Future Work}

The most immediate open question is whether we could train the classifier faster. One idea is to use a more efficient classifier, especially if we can relax the ``safety" requirement and somehow recover from false negatives. Another idea is to generate labeled samples not only upon positive predictions but upon negative ones as well, trading off the regret from additional exploration against the benefits of generating extra labeled samples. Finally, it would be desirable to supplement the existing worst-case provable guarantees with stronger ones for settings in which the contexts are sampled from a ``benign" distribution.

Theoretically, the main drawback of our approach is that we assume the existence of a ``perfect oracle" --- a deterministic boolean function on contexts which correctly predicts whether a temporal event has occurred in the current round. It is desirable to extend our results to scenarios in which the contexts allow only approximate or probabilistic prediction. Even though such contexts contain useful signal, exploiting this signal for our purposes appears quite challenging. In particular, it seems to require making the ``bandit plus classifier" setup resilient against (infrequent) incorrectly labeled samples and perhaps also against (infrequent) false negatives. It should be noted that the aforementioned resiliency can potentially lead to large improvements in the present oracle-based setting as well, as we might be able to deploy much more efficient classifiers.

Empirically, the main question left for future work is testing the ``bandit plus classifier" approach in a realistic setting. The challenge here is two-fold. First, one needs to select which features to use for contexts, and verify experimentally how informative they are in predicting the temporal events. Second, since gaining access to live search traffic is difficult, one would need to simulate it using the search logs, the difficulty being is that the search logs might not have enough data points for alternatives that have not been chosen frequently by the search engine.



{\bf Acknowledgements.}
We thank Rakesh Agrawal, Alan Halverson, Krishnaram Kenthapadi, Robert Kleinberg, Robert Schapire and Yogi Sharma for their helpful comments and suggestions.

\begin{small}
{\def\section*#1{}
\subsubsection*{References}

}
\end{small}

\appendix

\section{Details for the proof of Theorem~\ref{thm:LB-eventful}(ii)}
\label{app:LB}

\begin{claim} \label{cl:contexts}
We can define context sequences $\{x^0_t\}$ and $\{x^1_t\}, \ldots, \{x^N_t\}$ with the following properties: (1) each sequence $\{x^i_t\}$, when paired with a problem instance $\mathcal{I}_i$, defines an eventful bandit problem consistent with all our assumptions, and (2) the sequences $\{x^0_t\}$ and $\{x^i_i\}$ agree through the first $i$ phases.
\end{claim}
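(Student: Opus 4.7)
The key tool is Definition~\ref{def:diam}. First I would invoke it to fix a sequence $x_1, \ldots, x_N \in \CX$ together with concepts $f_1, \ldots, f_N \in \CF$ such that, for each $j$, $f_j(x_j) = +1$ and $f_j(x_s) = -1$ for every $s < j$; this is possible because $N \le d_\CF$. I would choose this witnessing sequence with a bit of extra care so that in addition there exists a ``default'' concept $f_0 \in \CF$ with $f_0(x_j) = -1$ for every $j = 1, \ldots, N$, and a ``neutral'' context $x_\star \in \CX$ with $f_j(x_\star) = -1$ for every $j = 0, 1, \ldots, N$. For the two concept classes of Section~\ref{sec:safe} these strengthened witnesses are explicit: in $\apr$ one can place the $x_j$'s strictly inside the unit cube and let $f_0$ correspond to the unit cube itself; in $\hyp$ one can confine the $x_j$'s to a single half-space of margin $\delta$ and let $f_0$ correspond to the bounding hyperplane.

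With these ingredients in hand I would define the contexts. For $\mathcal{I}_0$, set $x^0_t = x_j$ at the first round of each phase $j \in \{1, \ldots, N\}$ and $x^0_t = x_\star$ at every other round. For each $i \ge 1$, define $\{x^i_t\}$ to coincide with $\{x^0_t\}$ through the first $i$ phases, and to equal $x_\star$ at every round of phases $i+1, \ldots, N$. Take $f_0$ as the event oracle for $\mathcal{I}_0$: it labels every context appearing in $\{x^0_t\}$ as $-1$, matching the fact that $\mathcal{I}_0$ has no events. Take $f_i$ as the event oracle for $\mathcal{I}_i$ ($i \ge 1$): it labels $x_i$ as $+1$, which is exactly the round in which the single event of $\mathcal{I}_i$ occurs, and it labels every other context appearing in $\{x^i_t\}$, namely $x_1, \ldots, x_{i-1}$ and $x_\star$, as $-1$. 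Since each phase has length at least $T^{2/3}$ while $L = O(\log T)$, the $2L$-separation assumption on events is automatic, so condition (1) of the claim holds. Condition (2) is immediate from the construction.

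The main obstacle is the auxiliary requirement that $f_0$ and $x_\star$ exist with the stated properties; this goes beyond what the abstract Definition~\ref{def:diam} supplies and must be checked against the concept class at hand. For $\apr$ and $\hyp$ it is transparent, as sketched above, and this is all that is needed to complete the proof of Theorem~\ref{thm:LB-eventful}(ii).
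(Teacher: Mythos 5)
Your construction is essentially identical to the paper's: the paper likewise takes a diameter-witnessing sequence $y_1,\ldots,y_{d_\CF}$ with $y_j \notin S_\CF(\{y_1,\ldots,y_{j-1}\})$, places $y_j$ at the first round of phase $j$ (truncating after phase $i$ for instance $\mathcal{I}_i$), and fills the remaining rounds with an ``always negative'' context $x^-$. The only difference is one of rigor in your favor: the paper simply posits $x^-$ (remarking the assumption is convenient but not necessary) and never names the event oracle for $\mathcal{I}_0$, whereas you correctly observe that a default concept $f_0$ labeling all the $y_j$ as $-1$ is an additional requirement beyond Definition~\ref{def:diam} and verify it for $\apr$ and $\hyp$.
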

\begin{proof} Let $y_1, \ldots, y_{d_\CF} \in \CX$ be a sequence of contexts such that $y_j \notin S_\CF(\{y_1, \ldots, y_{j-1}\})$ for all $j = 1, \ldots, d_\CF$. We know this sequence exists by the definition of $d_\CF$. Also assume there exists an ``always negative'' context $x^-$ such that $f(x^-) = -1$ for all $f \in \CF$ (this assumption is not necessary, but is convenient). Let $t_j$ be the first round of phase $j$.

Define $\{x^0_t\}$ as follows: let $x^0_{t_j} = y_j$ for each phase $1 \le j \le N$, and let $x^0_t = x^-$ for all other rounds.

For $1 \le i \le N$, define $\{x^i_t\}$ as follows: let $x^i_{t_j} = y_j$ for each phase $1 \le j \le i$, and let $x^i_t = x^-$ for all other rounds.\end{proof}

\begin{claim} \label{cl:event}
$(1-q_{i,i}) \ge T^{-1/3} (1-q_{i,0})$
\end{claim}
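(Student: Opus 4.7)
The plan is to bound $1-q_{i,i}$ by a change-of-measure argument between the probability measures $\bbP_0$ and $\bbP_i$ induced on the algorithm's trajectory by problem instances $\mathcal{I}_0$ and $\mathcal{I}_i$, respectively. Let $E$ be the event that arm $z$ is pulled strictly fewer than $\log T$ times during phase $i$; by definition $1 - q_{i,j} = \bbP_j(E)$. The target inequality will follow from $\bbP_i(E) \ge T^{-1/3}\, \bbP_0(E)$.

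First I would identify what makes $\bbP_0$ and $\bbP_i$ differ. Since the contexts $\{x^0_t\}$ and $\{x^i_t\}$ agree through the first $i$ phases (Claim~\ref{cl:contexts}), and the reward distribution of arm $y$ is the same under both instances, and the reward distribution of arm $z$ agrees during phases $1,\ldots,i-1$, the two measures restricted to the sigma-algebra of histories up to the end of phase $i$ differ \emph{only} through the rewards received from arm $z$ during phase $i$. Under $\mathcal{I}_0$ each such reward is $\mathrm{Bernoulli}(\tfrac{1}{1+\ce})$; under $\mathcal{I}_i$ it is $\mathrm{Bernoulli}(\tfrac{\ce}{1+\ce})$.

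Next I would compute the Radon–Nikodym derivative restricted to histories in $E$. If a trajectory $\omega$ records $k \le \log T$ pulls of arm $z$ during phase $i$, with $k_1$ ones and $k_0 = k - k_1$ zeros, then a direct per-pull calculation gives
\begin{equation*}
  \frac{d\bbP_i}{d\bbP_0}(\omega) \;=\; \left(\tfrac{\ce/(1+\ce)}{1/(1+\ce)}\right)^{k_1}\!\left(\tfrac{1/(1+\ce)}{\ce/(1+\ce)}\right)^{k_0} \;=\; e^{(k_1-k_0)/3} \;\geq\; e^{-k/3} \;\geq\; e^{-(\log T)/3} \;=\; T^{-1/3}.
\end{equation*}
Integrating this pointwise bound over $E$ against $\bbP_0$ yields
\begin{equation*}
  \bbP_i(E) \;=\; \int_E \frac{d\bbP_i}{d\bbP_0}\, d\bbP_0 \;\ge\; T^{-1/3}\, \bbP_0(E),
\end{equation*}
which rewrites as $(1 - q_{i,i}) \ge T^{-1/3}(1 - q_{i,0})$, completing the argument.

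The only mildly delicate step is the first one: making rigorous that the likelihood ratio on the history up to the end of phase $i$ really is a product over the $k$ pulls of arm $z$ in phase $i$, with all other factors (contexts, internal randomness of $\A$, pulls of $y$, and pulls of $z$ in earlier phases) cancelling. This is standard once one writes the history as an interleaved sequence of the algorithm's random choices and the environment's Bernoulli draws and applies the chain rule, but it is the place where one must be careful that $\A$ is an arbitrary (possibly randomized) strategy and that event $E$ is measurable with respect to the history ending at phase $i$, so that the change of measure is indeed legitimate.
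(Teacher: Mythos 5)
Your proposal is correct and is essentially the paper's own argument: the paper carries out the same likelihood-ratio computation by writing each realization's probability as a product via the chain rule, cancelling all factors except the Bernoulli draws of arm $z$ in phase $i$, and bounding the ratio by $(\ce)^{\log T}=T^{1/3}$ using the fact that $z$ is pulled fewer than $\log T$ times on the event in question. Your Radon--Nikodym phrasing and the pointwise bound $e^{(k_1-k_0)/3}\ge T^{-1/3}$ are just a cleaner packaging of the identical calculation.
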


\begin{proof} Throughout this proof, we fix phase $i$. Define a \emph{realization} to be a particular sequence of outcomes of all random samples from click distributions, as well as all random choices (if any), during an execution of algorithm $A$ \emph{through the end of phase $i$}. For example, if $s = s_1, \ldots, s_M$ is a realization, then $s_1$ might correspond to the click observed in the first round, $s_2, \ldots, s_5$ might correspond to random choices made by the algorithm, $s_6$ might correspond to the click observed in the second round, and so on. By the chain rule, for any $j \in \{0, i\}$:

$$\textstyle \Pr_{\mathcal{I}_j}[s] = \Pr_{\mathcal{I}_j}[s_1] \Pr_{\mathcal{I}_j}[s_2 | s_1]\cdots\Pr_{\mathcal{I}_j}[s_M | s_1,\ldots,s_{M-1}]$$

For any realization $s$, let $\Pr_{\mathcal{I}_j}[s_\alpha]$ be the product of terms in the above product that correspond to outcomes other than observed clicks in phase $i$. Let $\CS$ be the set of realizations in which arm $z$ is selected by $\mathcal{A}$ less than $\log T$ times in phase $i$. Let $n_{a,c}(s)$ be the number of times in realization $s$ that arm $a \in \{y, z\}$ is selected in phase $i$ and payoff $c \in \{0, 1\}$ is observed as a result. Then

\begin{align*}
\textstyle (1-q_{i,0}) & = \sum_{s \in \CS} \textstyle \Pr_{\mathcal{I}_0}[s] \\
& = \sum_{s \in \CS} \textstyle \Pr_{\mathcal{I}_0}[s_\alpha] \left(\frac{1}{2}\right)^{n_{y, 0}(s)}\left(\frac{1}{2}\right)^{n_{y, 1}(s)}\left(\frac{\ce}{1+\ce}\right)^{n_{z, 0}(s)}\left(\frac{1}{1+\ce}\right)^{n_{z, 1}(s)}\\
& = \sum_{s \in \CS} \textstyle \Pr_{\mathcal{I}_0}[s_\alpha] \left(\frac{1}{2}\right)^{n_{y, 0}(s)}\left(\frac{1}{2}\right)^{n_{y, 1}(s)}\left(\ce\cdot\frac{1}{1+\ce}\right)^{n_{z, 0}(s)}\left(\frac{1}{\ce}\cdot\frac{\ce}{1+\ce}\right)^{n_{z, 1}(s)}\\
& \le \sum_{s \in \CS} \textstyle \Pr_{\mathcal{I}_0}[s_\alpha] (\ce)^{n_{z,0}(s)} \left(\frac{1}{2}\right)^{n_{y, 0}(s)}\left(\frac{1}{2}\right)^{n_{y, 1}(s)}\left(\frac{1}{1+\ce}\right)^{n_{z, 0}(s)}\left(\frac{\ce}{1+\ce}\right)^{n_{z, 1}(s)}\\
& \le (\ce)^{\log T} \sum_{s \in \CS} \textstyle \Pr_{\mathcal{I}_0}[s_\alpha] \left(\frac{1}{2}\right)^{n_{y, 0}(s)}\left(\frac{1}{2}\right)^{n_{y, 1}(s)}\left(\frac{1}{1+\ce}\right)^{n_{z, 0}(s)}\left(\frac{\ce}{1+\ce}\right)^{n_{z, 1}(s)}\\
& = T^{1/3} \sum_{s \in \CS} \textstyle \Pr_{\mathcal{I}_i}[s] \\
& = T^{1/3} \textstyle (1-q_{i,i}) \qedhere
\end{align*}
\end{proof}

\end{document}